%% file: slod_arxiv_v2.tex
\documentclass{ceurart}
\usepackage{amssymb, amsthm, mathtools}
\usepackage{bm}

\usepackage{booktabs, caption, subcaption}
\usepackage[table]{xcolor}
\usepackage{float}

\usepackage{algorithm, algorithmic}

\usepackage{enumitem}

\newtheorem{definition}{Definition}
\newtheorem{theorem}{Theorem}
\newtheorem{proposition}{Proposition}

\DeclareMathOperator{\dH}{d_{\mathbb{H}}}
\DeclareMathOperator{\Exp}{Exp}
\DeclareMathOperator{\Log}{Log}
\DeclareMathOperator{\JSD}{JSD}
\DeclareMathOperator*{\argmin}{arg\,min}
\newcommand{\BB}{\mathbb{B}^d}
\newcommand{\RR}{\mathbb{R}}
\newcommand{\HH}{\mathbb{H}}

\begin{document}

\copyrightyear{2026}
\copyrightclause{Copyright for this paper by its authors.
  Use permitted under Creative Commons License Attribution 4.0
  International (CC BY 4.0).}

\conference{1st Workshop on Graphs Across AI: From Structural Reasoning to
  Foundation Models, held in conjunction with the IEEE World Congress on
  Computational Intelligence (WCCI 2026), June 21--26, 2026, Maastricht,
  The Netherlands}

\title{Semantic Level of Detail for Knowledge Graphs: Discovering
  Abstraction Boundaries via Spectral Heat Diffusion}

\tnotemark[1]
\tnotetext[1]{Discussion paper presented at the 1st Workshop on Graphs Across AI (GRAAI), IEEE WCCI 2026; this extended manuscript adds the full proofs of Lemmas~A.1 and~A.2 in Appendix~\ref{app:coherence}.}

\author[1]{Edward Izgorodin}[%
  orcid=0009-0006-2529-4393,
  email=izgorodin@me.com,
]
\cormark[1]
\address[1]{Mnemoverse.AI, Funchal, Madeira, Portugal}
\cortext[1]{Corresponding author.}

\begin{abstract}
Graph-structured knowledge systems---from knowledge graphs to GraphRAG pipelines---increasingly organize information into hierarchical communities, yet lack a principled mechanism for \emph{continuous resolution control}: where do the qualitative boundaries between abstraction levels lie, and how should an agent navigate them? Current approaches rely on discrete community detection with manually tuned resolution parameters (e.g., Leiden~$\gamma$), offering no continuous zoom and no formal guarantees. We introduce \emph{Semantic Level of Detail} (SLoD), a framework that addresses both problems by defining a continuous zoom operator via heat kernel diffusion on a graph Laplacian whose kNN structure is induced by a Poincar\'{e}-ball embedding. We prove hierarchical coherence in the tree limit (exact tree with Sarkar embedding), with bounded approximation error, and demonstrate consistent boundary-detection behaviour on noisy hierarchies; spectral gaps in the graph Laplacian then induce \emph{emergent scale boundaries}---scales where the representation undergoes qualitative transitions---detectable without manual resolution tuning. On synthetic hierarchies (HSBM, 1024 nodes), spectral clustering at the BoundaryScan-detected scale recovers planted levels, with macro ARI saturating at $1.00$ in the high-SNR regime (50-seed median, Table~\ref{tab:ablation_macro}) and meso ARI reaching $0.89$ [$0.86$, $0.92$] at $r{=}200$. On the full WordNet noun hierarchy (82K synsets), using 100 stratified leaf queries, detected boundaries align with true taxonomic depth ($\tau = 0.79$), demonstrating meaningful abstraction-level discovery in real-world knowledge graphs without resolution-parameter tuning. The composite weights, MAD threshold, and kNN-parameter rule ($k = \max(10, \min(\lfloor\sqrt{N}\rfloor, 50))$) use defaults that transferred unchanged between HSBM and WordNet; their behavior on graphs with implicit or qualitatively different hierarchical structure is open (\S\ref{sec:discussion}).
\end{abstract}

\begin{keywords}
  knowledge graphs \sep
  spectral graph theory \sep
  community detection \sep
  hyperbolic geometry \sep
  heat kernel diffusion \sep
  graph-based AI memory
\end{keywords}

\maketitle

\hypersetup{pdfauthor={Edward Izgorodin}}

\section{Introduction}\label{sec:intro}

Graph-based knowledge organization is central to modern AI systems. GraphRAG \cite{edge2024graphrag}, knowledge graphs, and persistent memory systems like MemGPT \cite{packer2023memgpt} organize information into entity graphs with community structure. Yet a fundamental question remains open: \emph{where do the meaningful abstraction boundaries lie within a knowledge graph, and how should an agent transition between them?}

Current graph-based systems rely on discrete community detection with manually tuned resolution parameters. Leiden clustering requires sweeping over $\gamma$ values with no guarantee that the chosen granularity matches the task, and it produces discrete partitions rather than a continuous zoom. A software project has architecture-level concepts, module-level patterns, and line-level details; an agent reasoning over such a graph needs to dynamically select the appropriate resolution---a capability that existing systems lack.

We draw inspiration from computer graphics, where \emph{Level of Detail} (LOD) allows rendering engines to represent geometry at variable resolution \cite{luebke2003lod}. \textbf{Can we build an analogous LOD operator for knowledge graphs?}

Our key insight is that hyperbolic space provides the natural substrate. The Poincar\'{e} ball $\BB$ embeds tree-structured hierarchies with distortion $(1{+}\varepsilon)$ for any $\varepsilon > 0$ \cite{sarkar2011low}, while heat kernel diffusion on the graph Laplacian provides a family of smoothing operators parameterized by a continuous scale $\sigma$. Spectral gaps in the Laplacian then induce \emph{emergent scale boundaries}---values of $\sigma$ where the representation undergoes qualitative transitions---connecting SLoD to multi-scale community detection \cite{delvenne2010stability}.

\paragraph{Contributions.} \textbf{(1)}~A mathematical formulation of Semantic LOD as heat kernel diffusion on the Poincar\'{e} ball; \textbf{(2)}~Theoretical guarantees on hierarchical coherence and approximation quality; \textbf{(3)}~An efficient tangent-space aggregation algorithm; \textbf{(4)}~An emergent scale selection procedure tied to spectral structure; \textbf{(5)}~Empirical validation on synthetic hierarchies (HSBM) and a real-world DAG (WordNet, 82K synsets).\footnote{Code and experiment artefacts: \url{https://github.com/mnemoverse/mnemoverse-slod-paper}; permanent archive: \url{https://doi.org/10.5281/zenodo.19933482}}

\section{Background}\label{sec:prelim}

\paragraph{Heat Kernel on Graphs and Hyperbolic Space.}
The heat kernel $K_\sigma(x,y)$ on $\HH^d$ is the fundamental solution of the heat equation $\partial_\sigma u = \Delta_{\HH} u$. For $\HH^d$ with curvature $\kappa{=}{-}1$ \cite{grigoryan2009heat}:
\begin{equation}\label{eq:heat_kernel}
  K_\sigma(x,y) = \frac{1}{(4\pi\sigma)^{d/2}} e^{-(d{-}1)^2\sigma/4} \int_{\dH(x,y)}^{\infty} \frac{s \, e^{-s^2/(4\sigma)}}{(\cosh s - \cosh \dH(x,y))^{1/2}} \, ds
\end{equation}
The key property: $K_\sigma$ depends only on geodesic distance (isotropy), with $\sigma \to 0$ recovering point detail and $\sigma \to \infty$ producing uniform averaging. This makes $\sigma$ a natural ``zoom'' parameter that progressively simplifies representations without introducing spurious detail \cite{lindeberg1994scale}.

\paragraph{Fr\'{e}chet Mean on the Poincar\'{e} Ball.}
The Poincar\'{e} ball $\BB = \{x \in \RR^d : \|x\| < 1\}$ is a Hadamard manifold where the Fr\'{e}chet functional $F(z) = \sum_i w_i \, \dH^2(z, v_i)$ is strictly convex, ensuring a unique minimizer \cite{sturm2003probability, afsari2011riemannian}. Unlike Euclidean or spherical spaces, the Fr\'{e}chet mean on $\BB$ never bifurcates. Scale boundaries therefore manifest as rapid displacement of the mean and sensitivity changes in the Hessian---an observation that shapes our boundary detection approach.

\section{Semantic Level of Detail}\label{sec:method}

\subsection{Problem Formulation}

Given a knowledge corpus embedded as $\mathcal{V} = \{v_1, \ldots, v_N\} \subset \BB$, we seek a family of operators $\Phi_\sigma: 2^{\BB} \to \BB$ such that:
\begin{enumerate}
  \item \textbf{Coarse scale} ($\sigma$ large): $\Phi_\sigma(\mathcal{V})$ captures the global semantic theme.
  \item \textbf{Fine scale} ($\sigma$ small): $\Phi_\sigma(\mathcal{V})$ preserves local semantic detail.
  \item \textbf{Continuity}: The map $\sigma \mapsto \Phi_\sigma(\mathcal{V})$ is smooth.
  \item \textbf{Hierarchy-aware}: Nearby scales produce semantically related representations.
\end{enumerate}

\subsection{Diffusion-Based Aggregation}

\begin{definition}[Heat Kernel Weights]\label{def:weights}
For focus point $x_0 \in \BB$ and scale $\sigma > 0$:
\begin{equation}\label{eq:weights}
  w_i(\sigma, x_0) = \frac{K_\sigma(x_0, v_i)}{\sum_{j=1}^N K_\sigma(x_0, v_j)}
\end{equation}
\end{definition}

\begin{definition}[Semantic LOD Operator]\label{def:slod}
The SLoD representation at scale $\sigma$ with focus $x_0$ is the weighted Fr\'{e}chet mean:
\begin{equation}\label{eq:frechet}
  \Phi_\sigma(\mathcal{V}, x_0) = \argmin_{y \in \BB} \sum_{i=1}^N w_i(\sigma, x_0) \cdot \dH^2(y, v_i)
\end{equation}
\end{definition}

Since the Fr\'{e}chet mean has no closed form on $\BB$, we compute it iteratively in tangent space:

\begin{algorithm}[H]
\caption{SLoD via Tangent-Space Aggregation}
\label{alg:slod}
\begin{algorithmic}[1]
\REQUIRE Embeddings $\mathcal{V} = \{v_i\}_{i=1}^N \subset \BB$, focus $x_0 \in \BB$, scale $\sigma$, max iterations $T$, step size $\eta$ (default $1.0$ for Riemannian gradient descent on the Poincar\'{e} ball), convergence tolerance $\mathrm{tol}$
\STATE Compute weights $w_i \gets K_\sigma(x_0, v_i) / \sum_j K_\sigma(x_0, v_j)$
\STATE Initialize $\mu^{(0)} \gets v_{i^*}$ where $i^* = \arg\max_i w_i$ \hfill\COMMENT{Warm-start at the dominant-weight point}
\FOR{$t = 0$ to $T-1$}
    \STATE $u_i \gets \Log_{\mu^{(t)}}(v_i)$ for all $i$ \hfill\COMMENT{Map to tangent space}
    \STATE $\bar{u} \gets \sum_i w_i \cdot u_i$ \hfill\COMMENT{Weighted average}
    \STATE $\mu^{(t+1)} \gets \Exp_{\mu^{(t)}}(\eta \cdot \bar{u})$ \hfill\COMMENT{Map back}
    \IF{$\dH(\mu^{(t+1)}, \mu^{(t)}) < \mathrm{tol}$} \STATE \textbf{break} \hfill\COMMENT{Early exit on convergence} \ENDIF
\ENDFOR
\RETURN $\mu^{(t+1)}$
\end{algorithmic}
\end{algorithm}

For large $N$, we approximate via the graph Laplacian: construct a $k$-NN graph with hyperbolic distance weights, compute the normalized Laplacian $L$, and evaluate $\mathbf{w}(\sigma) = \exp(-\sigma L) \, \mathbf{e}_{x_0}$ via spectral methods \cite{hammond2011wavelets}; in practice, we use Lanczos partial eigendecomposition (\texttt{scipy.sparse.linalg.eigsh}) retaining $K_{\mathrm{eigs}}$ eigenpairs, costing $O(N K_{\mathrm{eigs}}^2)$, linear in $N$ (we distinguish $K_{\mathrm{eigs}}$, the Lanczos eigenvalue count, from the planted-level count $K$ used to describe HSBM hierarchy levels and from the data-dependent effective dimensionality $K^*(\sigma) = \min\{k : \sum_{i=1}^{k} e^{-\sigma\lambda_i} \big/ \sum_{i=1}^{N} e^{-\sigma\lambda_i} \ge 0.95\}$, the smallest number of heat-decayed eigenmodes capturing $95\%$ of the spectral energy at scale $\sigma$; this is the integer-valued effective rank used as the $y$-axis of Figure~\ref{fig:kstar_trajectory} and the return value of Algorithm~\ref{alg:boundary}). \emph{All experiments in this paper use this graph-Laplacian implementation; the continuous kernel of Eq.~\eqref{eq:heat_kernel} serves as the geometric idealization motivating the construction.} Empirically (\S\ref{sec:ablation}, takeaway~(a); \S\ref{sec:discussion}, \emph{Where the Hyperbolic Geometry Enters}), the hyperbolic substrate enters this pipeline primarily through the kNN graph Laplacian; the Fr\'echet step of Definition~\ref{def:slod} then returns the principled Poincar\'e-ball summary at the selected scale, with coherence guaranteed along the trajectory by Theorem~\ref{thm:coherence} below.

\section{Theoretical Analysis}\label{sec:theory}

\begin{theorem}[Hierarchical Coherence]\label{thm:coherence}
Let $\mathcal{T}$ be a weighted tree of bounded maximum degree, embedded in $\BB$ via Sarkar embedding with distortion $\delta = (1{+}\varepsilon)$, and let $L$ be the symmetric normalized Laplacian of the kNN graph on the embedded points. For any node $v$ with subtree-edge-diameter $D_{\mathcal{T}}(v)$ and $\sigma_1 < \sigma_2$:
\begin{equation}
  \dH\!\left(\Phi_{\sigma_1}(\mathcal{V}_v, v), \Phi_{\sigma_2}(\mathcal{V}_v, v)\right) \leq C \cdot |\sigma_2 - \sigma_1| \cdot (1{+}\varepsilon),
\end{equation}
where $C = \ell\,\|L\|_{1\to 1}\,D_{\mathcal{T}}(v)$ factors into the Sarkar base edge length $\ell$, the kNN-graph conditioning $\|L\|_{1\to 1} := \max_j \sum_i |L_{ij}|$, and the subtree-edge diameter. $C$ is independent of $N$ for fixed subtree depth and bounded kNN degree ratio (full statement and proof in Appendix~\ref{app:coherence}).
\end{theorem}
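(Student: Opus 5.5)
The argument composes two Lipschitz estimates. The first bounds how fast the heat-kernel weights move with $\sigma$. The second bounds how fast the weighted Fr\'echet mean on $\BB$ moves with its weights. Both are taken in the graph-Laplacian form that the paper actually implements, $\mathbf{w}(\sigma)=\exp(-\sigma L)\mathbf{e}_v$, renormalised to the simplex and restricted to $\mathcal{V}_v$.

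\textbf{Step 1: weight sensitivity.} Differentiating gives $\partial_\sigma \mathbf{w}(\sigma) = -L\exp(-\sigma L)\mathbf{e}_v$. Hence
\[
\|\partial_\sigma \mathbf{w}\|_1 \le \|L\|_{1\to1}\,\|\exp(-\sigma L)\mathbf{e}_v\|_1 .
\]
I would then use that the heat semigroup is $\ell_1$-contractive on the appropriately weighted cone. For the symmetric normalized Laplacian this is cleanest after conjugating by $D^{1/2}$, which costs a degree-ratio factor. That factor is bounded by the bounded-degree / bounded kNN degree-ratio hypothesis and can be absorbed into $\|L\|_{1\to1}$. The derivative of the normalisation contributes at most a factor $2$. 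Integrating over $[\sigma_1,\sigma_2]$ gives
\[
\|\mathbf{w}(\sigma_2)-\mathbf{w}(\sigma_1)\|_1 \lesssim \|L\|_{1\to1}|\sigma_2-\sigma_1|.
\]

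\textbf{Step 2: Fr\'echet-mean sensitivity.} Write $F_{\mathbf w}(y)=\sum_i w_i \dH^2(y,v_i)$. Its minimiser $\mu(\mathbf w)$ satisfies $\sum_i w_i \Log_{\mu}(v_i)=0$. On the Hadamard manifold $\BB$, $\dH^2(\cdot,v_i)$ is $2$-strongly geodesically convex, so the Riemannian Hessian of $F_{\mathbf w}$ dominates $2\,\mathrm{Id}$ (Background, via \cite{sturm2003probability}). Take $\mu_1=\mu(\mathbf w_1)$ and $\mu_2=\mu(\mathbf w_2)$. Comparing first-order conditions along the geodesic from $\mu_1$ to $\mu_2$ and using strong convexity gives
\[
2\,\dH(\mu_1,\mu_2) \le \Bigl\| \sum_i (w_{2,i}-w_{1,i})\Log_{\mu_2}(v_i) \Bigr\| \le \|\mathbf w_2-\mathbf w_1\|_1 \max_i \dH(\mu_2,v_i).
\]
Each mean lies in the geodesic convex hull of $\mathcal{V}_v$. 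Therefore $\max_i \dH(\mu_2,v_i)\le \operatorname{diam}_{\HH}(\mathcal{V}_v)$.

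\textbf{Step 3: geometry of the Sarkar embedding.} The Sarkar embedding has edge length $\ell$ and distortion $1{+}\varepsilon$. It gives $\operatorname{diam}_{\HH}(\mathcal{V}_v)\le (1{+}\varepsilon)\,\ell\,D_{\mathcal{T}}(v)$. Chaining Steps 1–3 yields the claimed bound with $C=\ell\|L\|_{1\to1}D_{\mathcal{T}}(v)$, up to absolute constants. This $C$ is independent of $N$, because $\|L\|_{1\to1}$ is controlled by the kNN degree ratio and $D_{\mathcal T}(v)$ by the fixed subtree depth.

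\textbf{Main obstacle.} I expect the hardest point to be Step 1's $\ell_1$ control of $\exp(-\sigma L)\mathbf e_v$ for the \emph{symmetric} normalized Laplacian. That operator is not Markov, so mass preservation fails. I would first try the similarity $L=D^{1/2}(I-P)D^{-1/2}$ with $P$ the random-walk matrix, which gives $\|\exp(-\sigma L)\|_{1\to1}\le \sqrt{d_{\max}/d_{\min}}$. I would then argue that this ratio is bounded for a kNN graph on a bounded-degree tree, which is exactly where the ``bounded kNN degree ratio'' hypothesis enters. A secondary subtlety is the restriction to $\mathcal V_v$ and the renormalisation. Because the normalised weights are a smooth function on the simplex interior, their Lipschitz constant is bounded by twice that of the unnormalised ones, provided the total mass on $\mathcal V_v$ stays bounded below. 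I would secure that lower bound using the focus $v\in\mathcal V_v$ and positivity of the heat kernel.
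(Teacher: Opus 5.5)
Your architecture is the paper's: a $\sigma$-Lipschitz bound on the weights, composed with a weight-Lipschitz bound on the Fr\'echet mean and the Sarkar diameter estimate. However, Step~1 is missing the idea that makes it work, and your proposed fix for the ``main obstacle'' would not give an $N$-independent constant.

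\textbf{Step 1: the missing cancellation.} The off-diagonal entries of $L$ are nonpositive, so $e^{-\sigma L}=e^{-\sigma}e^{\sigma A}$ with $A=D^{-1/2}WD^{-1/2}\ge 0$. Hence $u(\sigma)=e^{-\sigma L}\mathbf{e}_v\ge 0$ entrywise and $\|u\|_1=\mathbf{1}^T u=Z$. It follows that $\|\dot u\|_1=\|Lu\|_1\le\|L\|_{1\to1}\|u\|_1=\|L\|_{1\to1}Z$. The $1/Z$ produced by the normalisation therefore cancels exactly: $\|\dot{\mathbf w}\|_1\le 2\|\dot u\|_1/Z\le 2\|L\|_{1\to1}$, uniformly in $\sigma$. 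You need neither $\ell^1$-contractivity nor a lower bound on the mass, because the quantity you bound above and the quantity you need bounded below are the same number.

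Your route has two problems. First, it introduces a factor $\sqrt{d_{\max}/d_{\min}}$ that cannot be absorbed into $\|L\|_{1\to1}$. Each column sum of $|L|$ is $1$ plus a weighted average of $\sqrt{d_j/d_i}$ over the neighbours $i$ of $j$, so $\|L\|_{1\to1}$ only sees degree ratios across edges, not the global ratio. Second, the lower bound you propose (positivity plus the focus entry) gives only $Z\ge(e^{-\sigma L})_{vv}$. This decreases to $d_v/\sum_i d_i$, of order $1/N$, as $\sigma\to\infty$. The resulting constant grows with $N$, which is exactly the dependence the theorem excludes.

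The restriction to $\mathcal{V}_v$ is self-inflicted. Build the kNN Laplacian on the aggregated set $\mathcal{V}_v$, as Definition~\ref{def:weights} applied to $\Phi_\sigma(\mathcal{V}_v,v)$ and Lemma~A.2 do. If you instead restrict a full-graph heat kernel, $(Lu)|_{\mathcal{V}_v}$ involves $u$ at neighbours outside $\mathcal{V}_v$, and the cancellation is lost.

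\textbf{Step 2: a factor-$2$ slip.} The slip happens to land on the paper's sharp constant. The gradient of $F_{\mathbf w}$ is $-2\sum_i w_i\Log_y(v_i)$. So $2$-strong convexity along the geodesic from $\mu_1$ to $\mu_2$ gives $2\,\dH(\mu_1,\mu_2)\le 2\,\|\sum_i(w_{2,i}-w_{1,i})\Log_{\mu_2}(v_i)\|$, that is, $\dH(\mu_1,\mu_2)\le\|\sum_i(w_{2,i}-w_{1,i})\Log_{\mu_2}(v_i)\|$. Equivalently, work with $\tfrac12F_{\mathbf w}$, whose Hessian is $\succeq I$, as the paper does. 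Your triangle-inequality estimate then yields $L_F=D$, not $D/2$.

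Reaching $D/2$ requires the two ingredients of Lemma~A.1:
\begin{enumerate}
  \item Since $\sum_i(w_{2,i}-w_{1,i})=0$, write the vector as $\tfrac12\|\mathbf w_2-\mathbf w_1\|_1(\bar a_+-\bar a_-)$, where $\bar a_\pm$ are probability-weighted barycentres of $\{\Log_{\mu_2}(v_i)\}$.
  \item Bound $\|\bar a_+-\bar a_-\|\le D$ via the CAT(0) comparison $\|\Log_{\mu_2}(p)-\Log_{\mu_2}(q)\|\le\dH(p,q)$.
\end{enumerate}

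With both repairs, your monotonicity argument is a clean alternative to the paper's implicit-function-plus-path-integration proof of Lemma~A.1, since it needs no differentiability of $\Phi$ in $\mathbf w$. The composition then gives exactly $C=\ell\,\|L\|_{1\to1}D_{\mathcal T}(v)$. As written, though, the proposal yields at best the bound with an extra factor $2$ and an extra degree-ratio factor. With the mass lower bound you actually propose, it yields a constant that depends on $N$.
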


The proof composes two Lipschitz steps: the Fr\'{e}chet mean is $D/2$-Lipschitz in its weights on every Hadamard manifold \cite{karcher1977riemannian,sturm2003probability,afsari2011riemannian}, and the heat-kernel weights are $2\,\|L\|_{1\to 1}$-Lipschitz in $\sigma$ on the simplex. See Appendix~\ref{app:coherence} for the full argument with explicit constants.

In the local small-scale regime, the scale-dependent approximation error is $O(\sigma)$: as $\sigma \to 0$, the heat kernel converges to a delta distribution, and the error is controlled by the variance of the weight distribution (full assumptions in Appendix~\ref{app:coherence}).

\paragraph{Why Hyperbolic?} The Poincar\'{e} ball's contribution to SLoD is concentrated in one upstream step: kNN-graph construction. Hyperbolic distance defines neighbourhoods that respect tree-like structure---the Poincar\'{e} ball achieves Sarkar $(1{+}\varepsilon)$-distortion tree embedding in any dimension $d \ge 2$ \cite{sarkar2011low}, while Euclidean $\RR^d$ embeds the same tree with at best $\widetilde{O}(\Lambda^{1/(d-1)})$ distortion in $d \ge 2$ \cite{gupta2000embedding} (the symbol $\Lambda$ denotes the number of tree leaves, to avoid collision with the Laplacian $L$), nearly matching the prior lower bound $\Omega(\Lambda^{1/d})$. Empirically, replacing Poincar\'{e}-distance kNN with Euclidean-distance kNN costs $5.7$\,pp meso ARI aggregated across $r{\in}\{80,100,150,200\}$ (\S\ref{sec:ablation} takeaway~(b); Wilcoxon $p < 10^{-15}$). Once the Laplacian is built, the Fr\'{e}chet-centroid trajectory $\sigma \mapsto \Phi_\sigma$ is downstream of this geometric choice and is governed by the coherence guarantee of Theorem~\ref{thm:coherence} \emph{under correct hyperbolic coordinates}; as the random-points control demonstrates (\S\ref{sec:ablation} takeaway~(a)), the eigenvector-based $\sigma^*$ selection is robust to substituting i.i.d.\ Gaussian noise for the Poincar\'{e} coordinates, but this does not validate $\Phi_\sigma$ equivalence under random aggregation. Hyperbolic geometry thus enters as a topology-defining metric for the kNN graph, not as a coordinate-system requirement for $\sigma^*$ selection (the latter is structural: $\sigma^*$ is read off Laplacian eigenvectors, which depend on the graph but not on the point cloud, so the random-points equivalence at $\sigma^*$ is not just an empirical observation but follows by construction).

\section{Emergent Scale Selection}\label{sec:boundary}

The SLoD operator provides representations at any scale $\sigma$, but practical systems need to identify \emph{which scales matter}. We show that the spectral structure of the graph Laplacian induces natural boundaries.

\subsection{Spectral Motivation}

The heat kernel's spectral decomposition on $L$ with eigenvalues $0 = \lambda_1 \leq \lambda_2 \leq \cdots \leq \lambda_N$:
\begin{equation}\label{eq:spectral_decomp}
  K_\sigma(i,j) = \sum_{k=1}^{N} e^{-\sigma \lambda_k} \phi_k(i) \phi_k(j)
\end{equation}
At diffusion time $\sigma$, modes with $\lambda_k \sigma \gg 1$ are exponentially suppressed. A \emph{spectral gap} between $\lambda_k$ and $\lambda_{k+1}$ creates a range $1/\lambda_{k+1} < \sigma < 1/\lambda_k$ where exactly $k$ modes dominate.

\begin{proposition}[Spectral Scale Boundaries, heuristic; cf.\ \cite{vonluxburg2007tutorial, hammond2011wavelets}]\label{prop:spectral}
Under isolated-gap and mode-dominance assumptions, let $L$ have gap ratio $\rho_k = \lambda_{k+1}/\lambda_k > R$ for threshold $R > 1$ (distinct from the HSBM signal ratio $r$ of \S\ref{sec:exp1}). Then:
\begin{itemize}
  \item[\textbf{(i)}] The weight divergence $D_w(\sigma) = \JSD(\mathbf{w}(\sigma) \| \mathbf{w}(\sigma{+}\Delta\sigma))$ achieves a local maximum near $\sigma^* \approx 1/\lambda_k$.
  \item[\textbf{(ii)}] The effective dimensionality drops from $k$ to $k{-}1$ as $\sigma$ crosses $\sigma^*$.
  \item[\textbf{(iii)}] The representation velocity satisfies $V(\sigma^*) \geq c\,(\lambda_{k+1} - \lambda_k)$ for some graph-dependent constant $c > 0$ (a velocity floor proportional to the absolute spectral gap).\footnote{The experiments in this paper probe (i) and (ii) via composite-peak alignment with $1/\lambda_k$ and via the integer-valued $K^*(\sigma)$ trajectory through planted levels (Figure~\ref{fig:kstar_trajectory}); empirical verification of the (iii) velocity-floor bound is left to future work.}
\end{itemize}
\end{proposition}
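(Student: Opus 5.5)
The plan is to work entirely in the eigenbasis of $L$, because all three quantities in Proposition~\ref{prop:spectral} are functions of the spectral coefficients of $\mathbf{w}(\sigma)=\exp(-\sigma L)\mathbf{e}_{x_0}$. Write $a_m=\phi_m(x_0)$, so that $\mathbf{w}(\sigma)=\sum_m e^{-\sigma\lambda_m}a_m\phi_m$, after the simplex normalisation of Definition~\ref{def:weights}. I would make the two hypotheses precise as follows.
\begin{itemize}
  \item \emph{Isolated gap}: $\lambda_{k-1}\ll\lambda_k$ and $\lambda_{k+1}\ge R\lambda_k$, with $R$ large.
  \item \emph{Mode dominance}: $|a_k|$ is bounded below by a constant times the tail $\sum_{m>k}|a_m|$, so mode $k$ is not masked by the higher modes.
\end{itemize}

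With these in place, the strategy is to split $\mathbf{w}(\sigma)$ into three pieces and analyse each on the window $1/\lambda_{k+1}\lesssim\sigma\lesssim R/\lambda_k$:
\begin{itemize}
  \item a \emph{slow} part from modes $m<k$, which is essentially constant on this window;
  \item the \emph{active} mode $k$, whose coefficient $e^{-\sigma\lambda_k}a_k$ is decaying;
  \item a \emph{fast} tail from modes $m>k$, bounded by $e^{-\sigma\lambda_{k+1}}\le e^{-R\sigma\lambda_k}$.
\end{itemize}

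\textbf{Step 1 (part (ii)).} I would treat part~(ii) first because it is nearly bookkeeping. In the definition of $K^*(\sigma)$, the fraction of energy carried by modes $1,\dots,k$ is
\[
\frac{\sum_{m\le k}e^{-\sigma\lambda_m}}{\sum_{m}e^{-\sigma\lambda_m}}.
\]
Under the gap assumption, this crosses the $0.95$ threshold inside the window. I would then show that mode $k$'s share falls below $0.05$ of the total once $\sigma$ exceeds $c_0/\lambda_k$, where $c_0=\log(19)$ up to a multiplicity correction. This gives a drop from $k$ to $k-1$ at $\sigma^*=\Theta(1/\lambda_k)$, and the constant is absorbed into the ``$\approx$'' of the statement.

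\textbf{Step 2 (part (i)).} For small $\Delta\sigma$, expand the Jensen--Shannon divergence to second order. This gives $D_w(\sigma)\approx\tfrac18\,\Delta\sigma^2\,\mathcal{I}(\sigma)$, where
\[
\mathcal{I}(\sigma)=\sum_i \frac{\dot w_i(\sigma)^2}{w_i(\sigma)}
\]
is the Fisher information of the family $\sigma\mapsto\mathbf{w}(\sigma)$. By the decomposition above, $\dot{\mathbf{w}}\approx-\lambda_k e^{-\sigma\lambda_k}a_k\phi_k$ plus exponentially smaller terms, so $\mathcal{I}(\sigma)\approx\lambda_k^2e^{-2\sigma\lambda_k}\,g(\sigma)$ for a slowly varying $g$. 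Here I expect to use that $\mathbf{w}$ stays bounded below on the support, since the slow modes dominate the denominator, so $g$ is roughly constant on the window. On its own, $\lambda^2e^{-2\sigma\lambda}$ is monotone in $\sigma$. The local maximum must therefore come from the interplay with the fast tail: the tail contributes a term that decreases rapidly for $\sigma\lesssim1/\lambda_{k+1}$, while the mode-$k$ contribution is still saturating. The cleanest route is to differentiate $\log\mathcal{I}$ and locate its zero.

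\textbf{Main obstacle.} Honestly, this step is the weak point. Without an extra assumption, such as a normalisation in which $g(\sigma)$ itself increases as mass transfers from the modes $m\ge k$ onto the slow modes, a genuine interior maximum near $1/\lambda_k$ may not exist. I would first try to prove it for the model case of $k$ planted blocks with one-dimensional eigenspaces. If that fails, I would weaken the claim to a maximum of the normalised quantity $\sigma^2\mathcal{I}(\sigma)$, which for the function $\sigma^2\lambda_k^2e^{-2\sigma\lambda_k}$ peaks exactly at $\sigma=1/\lambda_k$. This is probably what the heuristic ``local maximum near $1/\lambda_k$'' really encodes.

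\textbf{Step 3 (part (iii)).} Define $V(\sigma)=\|\dot{\mathbf{w}}(\sigma)\|$, or the corresponding Fr\'echet-mean speed through the weight-Lipschitz bound used for Theorem~\ref{thm:coherence}. At $\sigma^*\approx1/\lambda_k$, compare the rate at which modes $k$ and $k{+}1$ are separated. The difference of their exponential factors is
\[
e^{-\sigma\lambda_k}-e^{-\sigma\lambda_{k+1}}\ge \sigma e^{-\sigma\lambda_{k+1}}(\lambda_{k+1}-\lambda_k),
\]
by the mean value theorem. Evaluating at $\sigma^*$ and dividing by $\Delta\sigma$ gives a lower bound of the form $c\,(\lambda_{k+1}-\lambda_k)$, with $c$ depending on $|a_k|$, $e^{-1}$ and $e^{-\lambda_{k+1}/\lambda_k}$. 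This dependence is why $c$ is only graph-dependent rather than universal.
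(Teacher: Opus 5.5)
\paragraph{Verdict.} The paper gives only a heuristic sketch for this proposition: modes with $\lambda_{k+1}\sigma\gg1$ are suppressed in the eigenbasis of $L$ while modes $\le k$ stay active. Your Steps~1--2 follow that same route in more detail. The genuine gap is Step~3.

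\paragraph{The lower bound on $V$.} The quantity in (iii) is the Fr\'echet-mean speed $V(\sigma)=\dH(\Phi_{\sigma+\Delta\sigma},\Phi_\sigma)/\Delta\sigma$ of Definition~\ref{def:indicators}, not $\|\dot{\mathbf w}\|$. The weight-Lipschitz estimate behind Theorem~\ref{thm:coherence} (Lemma~A.1) is an \emph{upper} bound, $\dH(\Phi(w),\Phi(w'))\le\frac{D}{2}\|w-w'\|_1$, so it cannot turn a lower bound on $\|\dot{\mathbf w}\|$ into one on $V$. A lower bound has to go through the implicit-function identity $\mathrm{Hess}\,\widetilde F_w\cdot\dot\Phi=\sum_i\dot w_i\log_{\Phi}(v_i)$ together with the Hessian \emph{upper} bound $r\coth r\le 1+r$. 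That gives $\|\dot\Phi\|\ge\|\sum_i\dot w_i\log_\Phi(v_i)\|/(1+D)$. You would then need the tangent-space first moment of the mode-$k$ weight change to stay away from zero. That hypothesis ties $\phi_k$ to the point cloud, and neither the gap nor your mode-dominance condition supplies it, since both depend only on $L$ and the focus, not on the coordinates.

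Under your hypotheses the claim in fact fails. Take graph and points in the Poincar\'e disk invariant under hyperbolic rotations by $120^{\circ}$ about $x_0$, for instance three congruent outer clusters around a small central cluster containing $x_0$. The weights are rotation-invariant for every $\sigma$, so by uniqueness of the Fr\'echet mean $\Phi_\sigma\equiv x_0$ and $V\equiv0$. Yet the centre-versus-periphery mode is excited ($a_k\neq0$), and it becomes an isolated eigenvalue when the central volume is small and the inter-cluster coupling is weak.

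\paragraph{The mean-value step.} This does not repair the argument. The difference $e^{-\sigma\lambda_k}-e^{-\sigma\lambda_{k+1}}$ compares two modes at the same $\sigma$. It is not a difference quotient in $\sigma$, so there is nothing to divide by $\Delta\sigma$. Evaluated at $\sigma^*=1/\lambda_k$, your bound equals $e^{-\rho_k}(\rho_k-1)$, which decays exponentially as the gap widens. It becomes ``proportional to the gap'' only after $e^{-\rho_k}/\lambda_k$ is absorbed into $c$, and then (iii) merely says $V(\sigma^*)>0$. Heuristically, the weight velocity at $\sigma^*$ is dominated by the decaying mode itself, of size $\lambda_k e^{-1}|a_k|$, because mode $k{+}1$ is already damped by $e^{-\rho_k}$. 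The defensible floor is therefore $V(\sigma^*)\ge c\,\lambda_k$, with $c$ built from $|a_k|$, $1/(1+D)$ and a geometric non-degeneracy constant. It is not a fixed multiple of $\lambda_{k+1}-\lambda_k$.

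\paragraph{Part (i).} Your fallback is not actually a weakening. Algorithm~\ref{alg:boundary} uses a log-spaced grid, so $\Delta\sigma=(q-1)\sigma$ and $D_w\approx\frac{(q-1)^2}{8}\sigma^2\mathcal I(\sigma)$, which is exactly your normalised quantity. You should drop the claim that $\mathbf w$ is bounded below on the support: nodes that are just receiving mass have $w_i\propto1-e^{-\sigma\lambda_k}$. In a two-block model this gives $\mathcal I=\lambda_k^2/(e^{2x}-1)$ with $x=\sigma\lambda_k$. That is monotone, so (i) does fail for fixed $\Delta\sigma$, as you suspected. But $\sigma^2\mathcal I\propto x^2/(e^{2x}-1)$ peaks at $x\approx0.8$, so the bump near $1/\lambda_k$ is genuine on the log grid.

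\paragraph{Step 1.} Your ``multiplicity correction'' is really a counting condition on the eigenvalues just above the gap, namely $\sum_{i>k}e^{-\lambda_i/\lambda_k}\ll k-1$. In the worst case this needs roughly $\rho_k\gtrsim\log(N/k)$, not merely $\rho_k>R$.
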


\noindent (i)--(iii) follow from the spectral decomposition (Eq.~\ref{eq:spectral_decomp}): when $\lambda_k \sigma \!\sim\! 1$, modes with $\lambda_{k+1} \sigma \gg 1$ are suppressed while modes $\le k$ remain active, so a spectral-gap crossing produces a JSD bump (i), a discrete drop in effective dimensionality (ii), and a velocity floor proportional to the gap (iii); analogous results appear in spectral-clustering theory \cite{vonluxburg2007tutorial} and graph-wavelet analysis \cite{hammond2011wavelets}.

\begin{figure}[h]
\centering
\includegraphics[width=0.85\columnwidth]{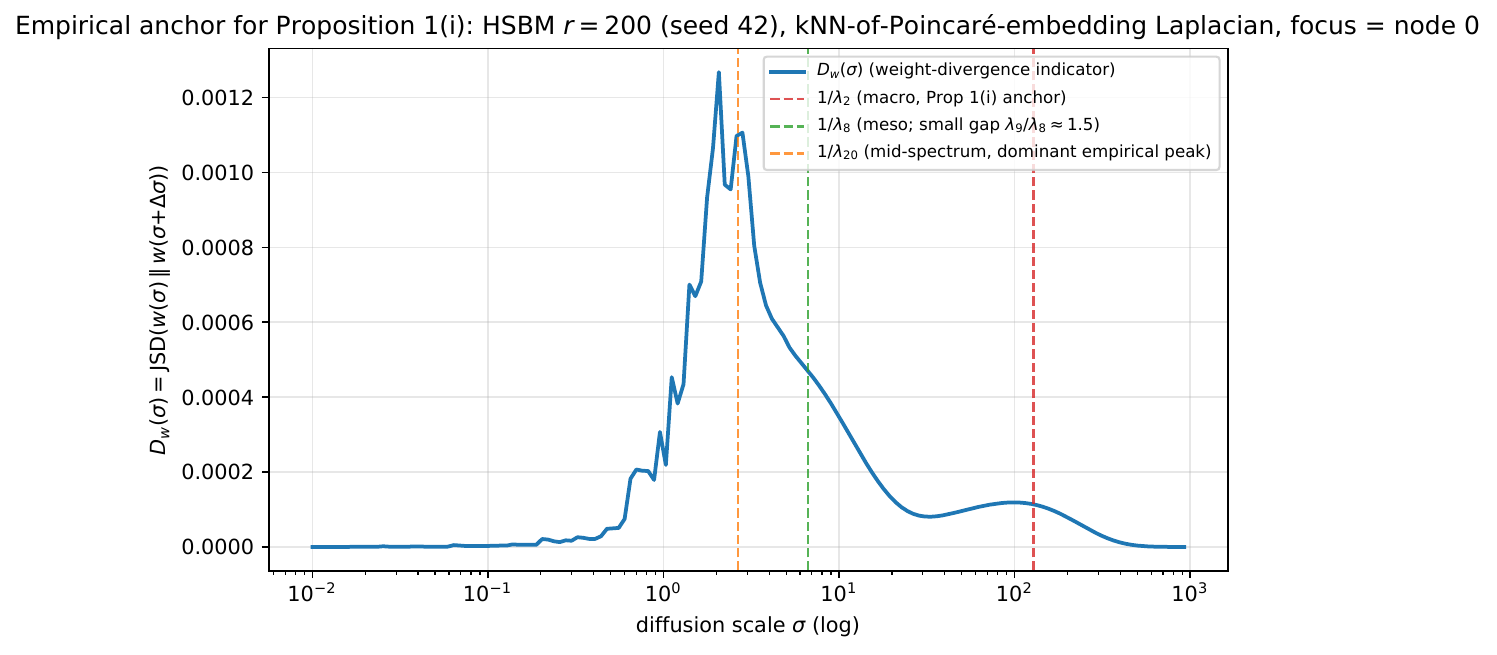}
\caption{Empirical anchor for Proposition~\ref{prop:spectral}(i) on the kNN-of-Poincar\'e-embedding Laplacian (HSBM, $r{=}200$, seed~42, focus = node~0). The weight-divergence indicator $D_w(\sigma) = \JSD(\mathbf{w}(\sigma)\,\|\,\mathbf{w}(\sigma{+}\Delta\sigma))$ exhibits two empirical features: a secondary local maximum near $\sigma \approx 1/\lambda_2 \approx 128$ (the macro Fiedler scale, where the gap ratio $\lambda_3/\lambda_2 \approx 9.8$ is large and Proposition~\ref{prop:spectral}(i) predicts a peak), and a dominant peak in the mid-spectrum region near $1/\lambda_{20}$ where many modes co-reorganise simultaneously. The meso scale $1/\lambda_8$ does not produce a clean local maximum, consistent with the small gap there ($\lambda_9/\lambda_8 \approx 1.5$, below the $R{=}2$ threshold used in Algorithm~\ref{alg:boundary}); Proposition~\ref{prop:spectral}(i)'s prediction is conditional on a large gap, and the meso level at this regime falls below that conditioning threshold.}
\label{fig:dw_anchor}
\end{figure}

\subsection{Boundary Detection Algorithm}

We combine three complementary signals into a composite boundary score:

\begin{definition}[Boundary Indicators]\label{def:indicators}
For scale $\sigma$ with step $\Delta\sigma$ (distinct from the Sarkar distortion $\delta$ of \S\ref{sec:theory}):
\begin{align}
  V(\sigma) &= \frac{\dH(\Phi_{\sigma+\Delta\sigma}, \Phi_\sigma)}{\Delta\sigma} \quad \text{(representation velocity)} \label{eq:rep_vel}\\
  D_w(\sigma) &= \JSD(\mathbf{w}(\sigma) \| \mathbf{w}(\sigma{+}\Delta\sigma)) \quad \text{(weight divergence)} \label{eq:weight_div}\\
  C_k(\sigma) &= 1 - \frac{|NN_k(\sigma) \cap NN_k(\sigma{+}\Delta\sigma)|}{|NN_k(\sigma) \cup NN_k(\sigma{+}\Delta\sigma)|} \quad \text{(neighborhood churn)} \label{eq:churn}
\end{align}
\end{definition}

The three indicators capture complementary aspects of a scale crossing: $V$ tracks geometric drift of the Fr\'echet centroid in $\BB$, $D_w$ tracks distributional shift of the diffusion weights, and $C_k$ tracks topological reorganisation of the local neighbourhood. They are z-score normalised ($\hat{V}, \hat{D}, \hat{C}$) and combined into a composite score $S(\sigma) = \alpha_1\hat{V} + \alpha_2\hat{D} + \alpha_3\hat{C}$ whose peaks mark candidate boundaries. Algorithm~\ref{alg:boundary} formalises the procedure.

\begin{algorithm}[H]
\caption{SLoD-BoundaryScan}
\label{alg:boundary}
\begin{algorithmic}[1]
\REQUIRE Embeddings $\mathcal{V} \subset \BB$, focus $x_0$, scale grid $\Sigma$ (log-spaced), composite weights $(\alpha_1, \alpha_2, \alpha_3)$, MAD multiplier $\beta$
\ENSURE Boundary set $\Sigma^*$, effective dimensionalities $\{K^*(\sigma^*)\}$

\STATE Compute graph Laplacian $L$ and eigenvalues $\lambda_1, \ldots, \lambda_K$
\STATE $\mathcal{C} \gets \{1/\lambda_k : \lambda_{k+1}/\lambda_k > R\}$ \hfill\COMMENT{Candidate scales}

\FOR{each $\sigma \in \Sigma$}
  \STATE $\mathbf{w}_\sigma \gets$ DiffusionWeights$(\mathcal{V}, x_0, \sigma)$; \quad $\Phi_\sigma \gets$ Algorithm~\ref{alg:slod} \hfill\COMMENT{Fr\'echet mean at $\sigma$}
\ENDFOR

\FOR{consecutive $(\sigma_t, \sigma_{t+1})$}
  \STATE Compute $V_t$, $D_t$, $C_t$ via Definition~\ref{def:indicators} (using $\Phi_\sigma, \mathbf{w}_\sigma$)
\ENDFOR

\STATE $S_t \gets \alpha_1 \hat{V}_t + \alpha_2 \hat{D}_t + \alpha_3 \hat{C}_t$ \hfill\COMMENT{Composite score}
\STATE $\Sigma^* \gets$ PeakPick$(S, \text{median} + \beta \cdot \text{MAD})$ \hfill\COMMENT{$\beta$ is the MAD multiplier; distinct from composite weights $\alpha_1,\alpha_2,\alpha_3$}
\RETURN $\Sigma^*, \{K^*(\sigma^*)\}$
\end{algorithmic}
\end{algorithm}

When the effective dimensionality $K^*(\sigma) > 1$ at a boundary, a single Fr\'{e}chet mean is a poor summary. We extend SLoD to a mixture representation via weighted Riemannian $k$-means, with each cluster center achieving Sarkar distortion over at most $N/K$ nodes.

\section{Experiments}\label{sec:experiments}

\paragraph{Evaluation Metrics.}
We use metrics standard for their respective tasks.
\textbf{ARI} (Adjusted Rand Index) \cite{hubert1985comparing}: chance-corrected agreement between predicted and planted partitions; $\mathrm{ARI}{=}1$ perfect, $0$ random, negative below chance.
\textbf{Kendall $\tau$} \cite{kendall1938new}: rank correlation between detected boundary scales $\sigma^*$ and the \emph{heights} of their nearest true ancestor depths, where each detected boundary is matched to the closest true ancestor depth and the matched depth is converted to height as $(\max\text{depth}-\text{depth})$; $\tau \in [-1, 1]$, expected positive (larger $\sigma$ $\Rightarrow$ coarser $\Rightarrow$ shallower ancestor). \textbf{Precision@}$h$: fraction of detected boundary depths within $h$ hierarchy levels of any true ancestor depth (reported at $h{=}0.5$). \textbf{Recall@}$h$: fraction of true ancestor depths covered by at least one detected boundary within $h$ levels (reported at $h{=}1, 2$). Detected boundary depths are obtained by matching each detected $\sigma^*$ to the closest true ancestor along the focus-to-root path and recording that ancestor's integer depth, so Precision@$0.5$ reduces to exact-match precision on integer depths; Recall@$1$ and Recall@$2$ then admit one-level and two-level slack respectively.
\textbf{HP} (Hierarchy Preservation) and \textbf{SP} (Sibling Proximity) are Poincar\'e embedding quality gates we adopt for this work (analogous in spirit to the rank-based reconstruction metrics evaluated for Poincar\'e embeddings in \cite{nickel2017poincare}, but operationalised here as ratio gates rather than mean rank): HP is the fraction of parent--child pairs with $\dH(\text{parent}, \text{child}) < \dH(\text{parent}, \text{random})$ (gate $> 0.90$); SP is the analogous condition for sibling vs.\ random pairs (gate $> 0.85$).

\subsection{Experiment 1: Boundary Recovery on Synthetic Hierarchies}\label{sec:exp1}

\paragraph{Setup.} Hierarchical Stochastic Block Model (HSBM) with planted 3-level structure: 1024 nodes $\to$ 2 macro $\to$ 8 meso $\to$ 64 micro communities. We vary inter-level ratio $r \in \{20, 40, 60, 80, 100, 150, 200\}$ to test detection sensitivity across the information-theoretic Kesten--Stigum phase transition. We compare against Louvain, greedy modularity, spectral $k$-sweep \cite{vonluxburg2007tutorial}, and the Leiden algorithm \cite{traag2019louvain} run with the Constant Potts Model resolution \cite{traag2011narrow} (henceforth ``Leiden CPM''). Hyperparameters use the unified default rule $k = \max(10, \min(\lfloor\sqrt{N}\rfloor, 50))$ for the kNN graph, giving $k{=}32$ for HSBM ($N{=}1024$) and $k{=}50$ for WordNet ($N{\approx}82\mathrm{K}$, \S\ref{sec:exp2}); composite weights $(\alpha_1, \alpha_2, \alpha_3) = (1/3, 1/3, 1/3)$; MAD multiplier $\beta = 2$; gap threshold $R = 2$ (Algorithm~\ref{alg:boundary}); Algorithm~\ref{alg:slod} runs at most $T{=}15$ iterations with step $\eta{=}1.0$ and early exit at convergence tolerance $\mathrm{tol}{=}10^{-6}$ on $\dH(\mu^{(t+1)}, \mu^{(t)})$; Lanczos eigendecomposition uses \texttt{scipy.sparse.linalg.eigsh} (scipy~$\ge$~1.12) at default tolerance.

\paragraph{Results.} Table~\ref{tab:exp1_ari} reports the \emph{direct-Laplacian baseline} (no embedding step) to localize SLoD's incremental contribution; the full kNN-of-embedding pipeline (the actual SLoD method) is evaluated under multi-seed conditions in \S\ref{sec:ablation}.

\begin{table}[h]
\centering
\caption{Adjusted Rand Index at macro ($K^*{=}2$), meso ($K^*{=}8$), and micro ($K^*{=}64$) scales across signal-to-noise ratios. Numbers are from spectral clustering on the direct combinatorial graph Laplacian with $K_{\text{eigs}}{=}80$ (sufficient to resolve the planted micro level); single seed (42), with multi-seed evidence in \S\ref{sec:ablation}. The kNN-of-Poincar\'e-embedding default of \S\ref{sec:ablation} reaches macro ARI ${=}0.42$ at $r{=}20$ (Table~\ref{tab:ablation_macro}), so the low \emph{macro} values here at $r{\le}40$ reflect the spectral-clustering baseline rather than SLoD itself (see takeaway~(a) of \S\ref{sec:ablation}).}
\label{tab:exp1_ari}
\small
\begin{tabular}{@{}lccccccc@{}}
\toprule
$r$ & 20 & 40 & 60 & 80 & 100 & 150 & 200 \\
\midrule
ARI (macro, $K^*{=}2$)  & 0.04 & 0.67 & 0.84 & 0.93 & 0.97 & 1.00 & 1.00 \\
ARI (meso, $K^*{=}8$)   & 0.09 & 0.15 & 0.24 & 0.43 & 0.63 & 0.79 & 0.91 \\
ARI (micro, $K^*{=}64$) & 0.04 & 0.10 & 0.24 & 0.47 & 0.69 & 0.91 & 1.00 \\
\bottomrule
\end{tabular}
\end{table}

\begin{figure}[t]
\centering
\includegraphics[width=\columnwidth]{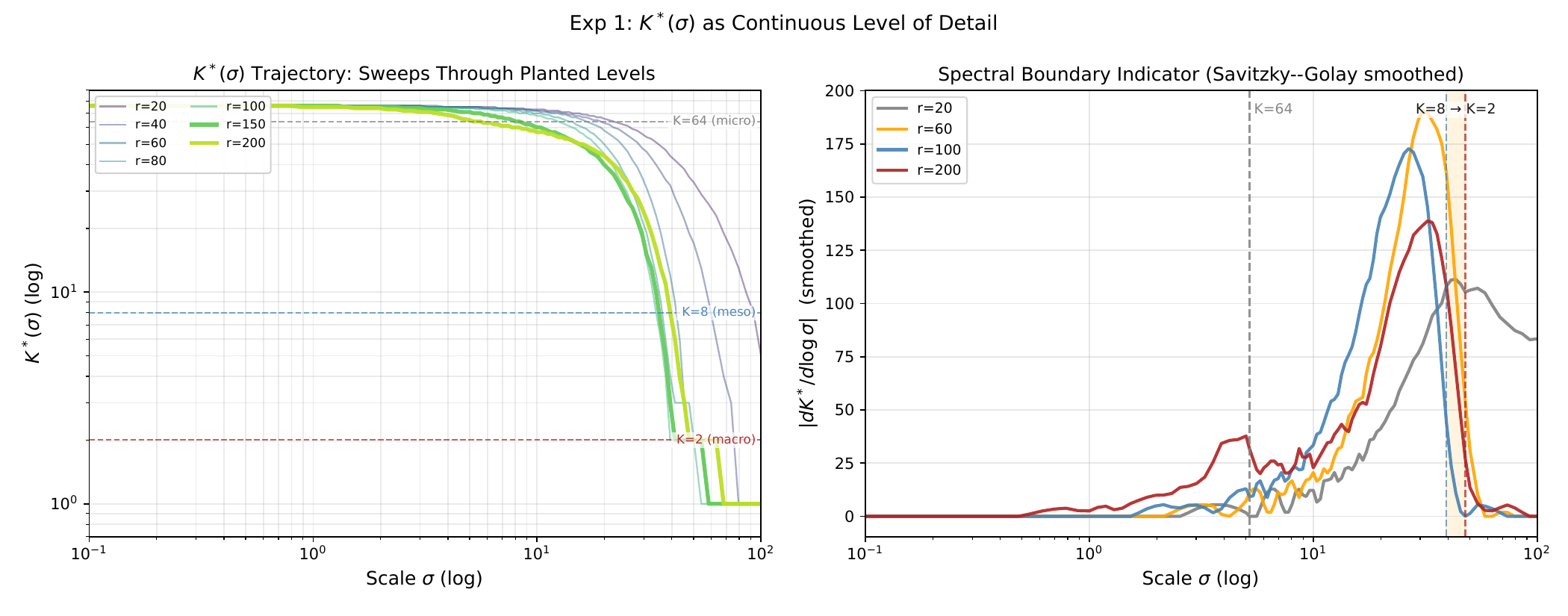}
\caption{Effective dimensionality $K^*(\sigma)$ vs.\ diffusion scale for varying signal strength $r$ (left, log--log axes; planted levels $K{=}2,8,64$ shown as horizontal dashed lines). At high $r$ the trajectory sweeps through all three planted levels in order, with $\sigma$-separations equispaced on the log axis; at low $r$ ($r{\le}40$) the trajectory stops above $K{=}8$, signalling sub-Kesten--Stigum SNR. Right: smoothed $|dK^*/d\log\sigma|$ (Savitzky--Golay window 11) as a diagnostic of effective-dimensionality changes (not the composite boundary score $S(\sigma)$ of Algorithm~\ref{alg:boundary}); concentrates at scales where $K^*$ crosses the planted levels (vertical reference lines for $r{=}200$). $K{=}8$ and $K{=}2$ are within $1.2\times$ of each other in $\sigma$ in the saturation regime, hence shown as a band.}
\label{fig:kstar_trajectory}
\end{figure}

Three key findings: \textbf{(1)}~$K^*(\sigma)$ sweeps through all three planted hierarchy levels ($K{=}2$, $K{=}8$, $K{=}64$) in order as $\sigma$ grows. \textbf{(2)}~Each level recovers at its own SNR threshold and saturates above it: macro reaches ARI~$1.00$ at $r{=}150$, meso $0.91$ at $r{=}200$, and \emph{micro $1.00$ at $r{=}200$} (Table~\ref{tab:exp1_ari}, single seed)---the finer the level, the higher the $r$ required, consistent with the Kesten--Stigum picture. Multi-seed evidence at meso (Table~\ref{tab:ablation_meso}, full Poincar\'e at $r{=}200$, 50-seed median) is consistent with this saturation. \textbf{(3)}~At $r{=}200$, the full SLoD pipeline reaches meso ARI $0.89$ [$0.86$, $0.92$] (Table~\ref{tab:ablation_meso}, 50-seed median, BCa $95\%$ CI) versus Louvain~$0.49$, greedy modularity~$0.58$, and Leiden CPM~$0.84$ (best~$\gamma$); the baseline numbers are evaluated under their canonical single-config protocols and are not directly comparable to the multi-seed median of SLoD without a matched-protocol sweep---a known asymmetry tracked as a deferred item. The substantive claim is structural rather than numerical: $K^*(\sigma)$ detects all three planted levels from a \emph{single} spectral sweep without a per-level resolution parameter, whereas Leiden CPM requires a different $\gamma$ per level.

\begin{figure}[t]
\centering
\includegraphics[width=\columnwidth]{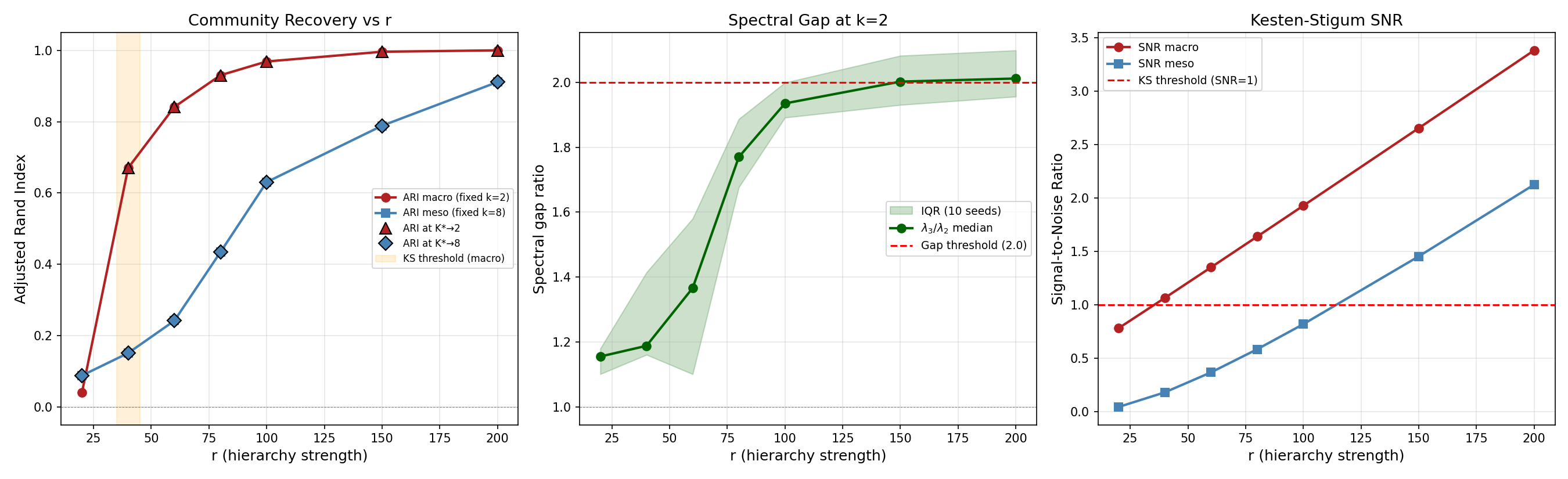}
\caption{Phase transition in boundary recovery. (a)~ARI at macro and meso scales vs.\ $r$ (single seed=42); shaded region marks Kesten--Stigum threshold. (b)~Spectral gap $\lambda_3/\lambda_2$ at $k{=}2$ on the \emph{direct combinatorial HSBM Laplacian} vs $r$ — \emph{median across 10 seeds (42--51) with shaded interquartile range}. The IQR widens at $r{=}60$--$100$, the regime where meso modes separate from the macro Fiedler value (high finite-size variance), then narrows for $r{\ge}100$ once the meso band stabilises above the macro--meso gap threshold ($\lambda_3/\lambda_2{>}2$). The earlier single-seed (42) trajectory showed a spurious dip at $r{=}80$ that is now visibly an extreme of the IQR rather than a structural feature; macro recovery (panel a) was unaffected. (c)~Theoretical signal-to-noise ratio; dashed line marks SNR$\,{=}\,1$ (Kesten--Stigum threshold; macro crosses at $r{\approx}40$, meso at $r{\approx}100$).}
\label{fig:phase_transition}
\end{figure}

\subsection{Experiment 2: Hierarchical Consistency on WordNet}\label{sec:exp2}

\paragraph{Setup.} WordNet 3.0 noun hierarchy \cite{fellbaum1998wordnet}: ${\sim}$82K synsets, DAG structure, max depth 19. Embedded in $\mathbb{B}^{10}$ via Poincar\'{e} embeddings \cite{nickel2017poincare}. BoundaryScan run from 100 leaf nodes stratified by depth (quantile binning into 10 depth bins with proportional sampling, fixed seed for reproducibility).

\paragraph{Results.} We deliberately chose WordNet because the ground-truth hierarchy is unambiguous and the embedding quality is high---both removing alternative explanations for the $\tau$ result. Embedding quality gates confirmed hierarchy preservation (HP$\,{=}\,0.994$) and sibling proximity (SP$\,{=}\,0.937$). Table~\ref{tab:exp2_metrics} summarizes BoundaryScan results. Extension to noisier or contested hierarchies (citation networks, biological taxonomies) is in Open Question~5.

\begin{table}[h]
\centering
\caption{Experiment~2 metrics on full WordNet noun hierarchy ($N{=}82{,}115$).}
\label{tab:exp2_metrics}
\small
\begin{tabular}{@{}lcc@{}}
\toprule
Metric & Value & Interpretation \\
\midrule
Kendall $\tau$ & 0.79 [0.75, 0.83] & strong ordering (bootstrap 95\% CI) \\
Recall@1 & 0.56 & 56\% of true depths detected $\pm$1 level \\
Recall@2 & 0.75 & 75\% detected $\pm$2 levels \\
Precision@0.5 & 0.79 & 79\% of detections near a true depth \\
Avg.\ peaks/node & 3.9 & $\sim$3--4 scale boundaries per leaf \\
\bottomrule
\end{tabular}
\end{table}

\begin{figure}[t]
\centering
\includegraphics[width=\columnwidth]{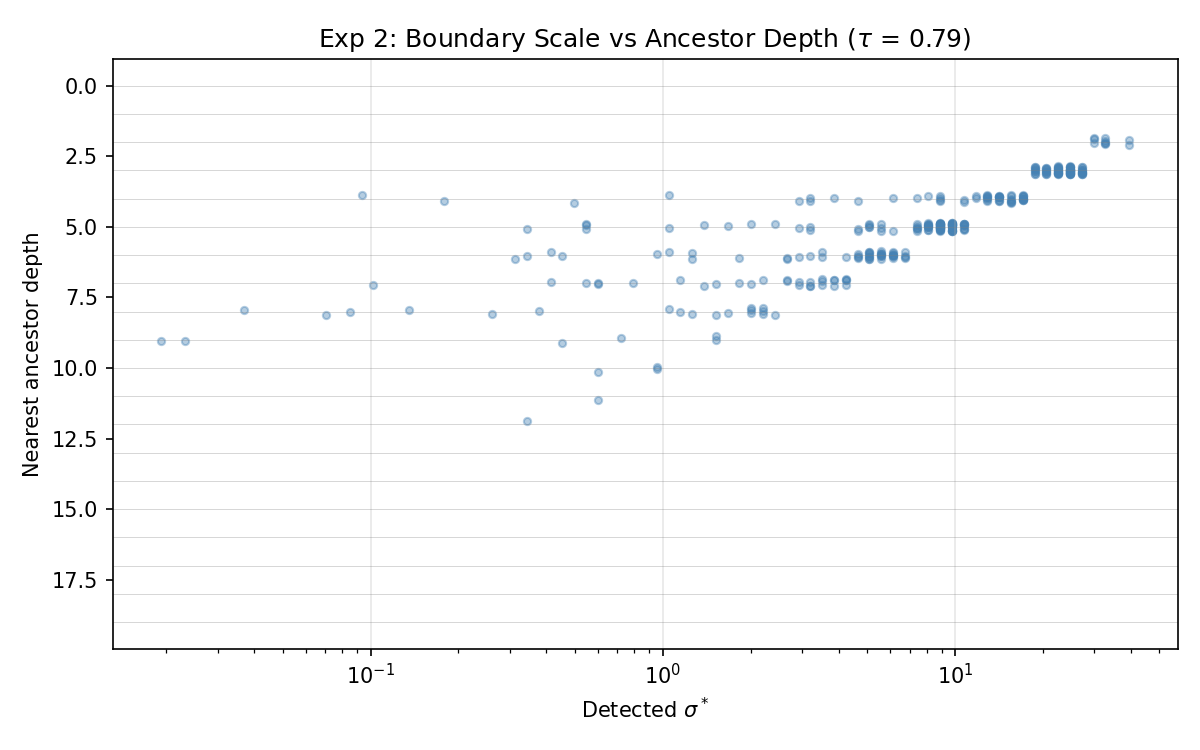}
\caption{Detected boundary scale $\sigma^*$ vs.\ true ancestor depth for 100 leaf nodes in WordNet ($N{=}82{,}115$). Kendall $\tau = 0.79$ confirms larger diffusion scales correspond to shallower ancestors.}
\label{fig:wordnet_scatter}
\end{figure}

Key findings: \textbf{(1)}~Kendall's $\tau{=}0.79$ validates that BoundaryScan correctly orders boundaries on a real-world DAG with 82K nodes. \textbf{(2)}~Precision@0.5$\,{=}\,0.79$ indicates most detected boundaries correspond to genuine ancestor depths; BoundaryScan is conservative (finds real boundaries, misses subtle deep ones). \textbf{(3)}~The Fr\'{e}chet mean at $\sigma^*$ acts as a community centroid rather than a pointer to a specific ancestor---desirable for graph-based retrieval where the appropriate abstraction level matters more than a specific node.

\subsection{Experiment 3: Ablation Study}\label{sec:ablation}

To disentangle the contribution of each pipeline component, we sweep six design dimensions on HSBM across inter-level ratios $r \in \{20, 40, 60, 80, 100, 150, 200\}$ with the 2{$\times$}4{$\times$}8 planted hierarchy and 50 Poincar\'e-MDS seeds per $r$ ($42$--$91$). Varying $r$ moves the task from the near-Kesten--Stigum detection regime ($r{\le}40$) through an intermediate regime into the saturated regime ($r{\ge}150$). The ablation dimensions are: (i) composite-score weights $(\alpha_1,\alpha_2,\alpha_3)$ combining representation velocity $V$, weight divergence $D_w$, and neighborhood churn $C_k$; (ii) MAD peak-picker multiplier ($\beta\in\{1,2,3\}$, distinct from the composite weights $\alpha_1,\alpha_2,\alpha_3$); (iii) Laplacian source (kNN-of-Poincar\'e-embedding vs.\ direct combinatorial Laplacian); (iv) embedding geometry (Poincar\'e vs.\ Euclidean Sammon MDS, with matched stress functional, optimiser family, iteration budget, dimension, and initialisation scale, differing only in the manifold metric and its retraction; Sammon-MDS is chosen for matched-baseline comparability, while production deployments would typically use learned embeddings such as Nickel--Kiela~\cite{nickel2017poincare}); (v) kNN edge weighting (Gaussian vs.\ binary); and (vi) a \emph{random-points} control that keeps the Poincar\'e-kNN Laplacian but replaces the points with i.i.d.\ Gaussian noise. For each configuration we run BoundaryScan from focus node $0$ on a common log-spaced scale grid, pick the top-scoring peak, and compute macro ($K^*{=}2$) and meso ($K^*{=}8$) ARI via fixed-$K$ spectral clustering at that $\sigma^*$ (KMeans \texttt{random\_state}${=}42$, $n_{\mathrm{init}}{=}10$). Bootstrap CIs are BCa with $n{=}10{,}000$ resamples; reported coverage is per-cell and \emph{uncorrected} for multiple comparisons across the $8\times7$ grid.

\input{tables/ablation.tex}

Six takeaways, grounded in 50-seed bootstrap CIs of Tables~\ref{tab:ablation_macro}--\ref{tab:ablation_meso} and the score-profile Figure~\ref{fig:ablation_profiles}.

\textbf{(a)~Laplacian construction is the load-bearing geometric step.} The direct-Laplacian row (kept Poincar\'e coordinates, swapped Laplacian for the combinatorial HSBM Laplacian) fails macro recovery completely at $r{=}20$ (median ARI ${=}0.00$ with $[0,0]$ CI) and is bimodal at $r{=}40$ (median ${=}0.00$ with the upper CI bound at $0.68$, indicating $\sim$half the seeds recover and $\sim$half fail), while the default kNN-of-embedding path achieves macro ARI ${=}0.42$ at $r{=}20$ and $0.72$ at $r{=}40$ on the same seeds---isolating the kNN graph construction as the geometric component most sensitive to SNR. At meso, the direct-Laplacian row trails the default by a consistent $9$--$18$\,pp at $r{\ge}60$ (Table~\ref{tab:ablation_meso}), so the kNN advantage is not just a low-SNR artefact. This is not merely an "embedding-helps-over-no-embedding" preprocessing effect: the Euclidean-MDS row (takeaway~(b)) also uses an embedding step but loses meso ARI to the Poincar\'e default---the gain therefore tracks hyperbolic geometry specifically, not preprocessing in general. Inversely, the random-points row (kept the Poincar\'e-kNN Laplacian, replaced coordinates with i.i.d.\ Gaussian noise) tracks the default within $\leq 5$\,pp at every macro and meso $r$. We frame this finding precisely as a two-step claim. \emph{(1, empirical)} The composite score $S(\sigma) = \alpha_1 \hat{V} + \alpha_2 \hat{D} + \alpha_3 \hat{C}$ uses two point-cloud-dependent indicators ($V$ via the Fr\'echet centroid $\Phi_\sigma$, and $C_k$ via $k$-NN of the points) and one point-cloud-independent indicator ($D_w$, depending only on $L$ and $\sigma$); the empirical observation is that the $\sigma^*$ chosen by this composite is similar between full Poincar\'e and random-points configurations, i.e.\ the $V$- and $C_k$-driven contributions to the peak location are robust to substituting i.i.d.\ Gaussian noise for the points. \emph{(2, structural)} Given the same $\sigma^*$, the downstream ARI is fixed-$K$ spectral clustering on the Laplacian eigenvectors at that scale, and those eigenvectors are by construction a function of $L$ alone (not of the point cloud), so the ARI is identical between configurations by construction. The within-$5$\,pp tracking of the random-points row therefore shows that \emph{$\sigma^*$ selection is robust to the point cloud used for the velocity and churn indicators}---not that the downstream Fr\'echet-mean centroids are themselves equivalent under random aggregation. A direct test of Fr\'echet-coordinate equivalence (the trajectory distance $\|\Phi_\sigma^{\mathrm{default}}-\Phi_\sigma^{\mathrm{random}}\|_{\HH}$) is deferred to the extended version. The practical takeaway: at deployment the Laplacian (or its top-$K_{\mathrm{eigs}}$ eigenpairs) is the object that needs to be shipped, and the choice of aggregation point cloud is a separate downstream question with its own coherence guarantee (Theorem~\ref{thm:coherence}).

\textbf{(b)~Embedding geometry does matter, but through the kNN graph.} Replacing Poincar\'e MDS with a Euclidean Sammon MDS (matched stress functional, Adam-family optimiser with the metric's natural retraction, identical iteration budget, dimension, and initialisation scale) yields a Laplacian with a consistently shorter useful scale range ($\sigma^*\!\approx\!0.4$ vs.\ $\approx\!7$, Figure~\ref{fig:ablation_profiles}(b)) and a small but statistically robust meso-ARI loss at $r{\ge}80$. Aggregating the 200 paired $(r, \text{seed})$ pairs at $r{\in}\{80, 100, 150, 200\}$ into a single across-cell paired effect avoids the multiplicity issue inherent in per-cell non-overlap claims: the median Poincar\'e-minus-Euclidean meso-ARI gap is $5.7$\,pp with BCa $95\%$ CI $[4.8, 6.7]$\,pp ($n_\mathrm{pairs}{=}200$, Wilcoxon signed-rank $p < 10^{-15}$). The per-$r$ cell medians decompose as $-6$\,pp at $r{=}80$, $-4$\,pp at $r{=}100$, $-10$\,pp at $r{=}150$, $-8$\,pp at $r{=}200$ (Table~\ref{tab:ablation_meso}). At macro the gap is within $\leq 3$\,pp at every $r$. The loss concentrates at the meso scale and at higher SNR, where the more accurate hyperbolic kNN topology starts to matter for finer partitions, and is attributable to the change in graph construction, not to the Fr\'echet step (per (a)).

\textbf{(c)~Binary kNN matches Gaussian-weighted kNN across the full range.} Stripping the Gaussian edge weight (retaining only the symmetric kNN topology) changes macro and meso ARI by $\leq 3$\,pp at every $r$, with overlapping bootstrap CIs in every cell of Tables~\ref{tab:ablation_macro}--\ref{tab:ablation_meso}. The Gaussian bandwidth $h_{\mathrm{kNN}}$ (median pairwise distance in the kNN graph; distinct from the Kendall $\tau$ used as the WordNet headline metric in \S\ref{sec:exp2}) is therefore largely cosmetic for both macro and meso recovery across the SNR range we tested, and a practitioner can drop $h_{\mathrm{kNN}}$ in favour of a binary kNN topology without measurable loss on this hierarchy.

\textbf{(d)~Single-indicator roles depend on the geometry.} Under Poincar\'e all three single indicators stay within $5$\,pp of the full composite at $r{=}200$ ($V$-only $0.88$, $C_k$-only $0.86$, $D_w$-only $0.84$ vs full $0.89$), so the composite mostly buys robustness rather than headline ARI. Under Euclidean the ordering inverts and the spread widens: $D_w$-only becomes the strongest single indicator ($0.82$ meso at $r{=}200$, matching full at $0.81$), while $V$-only and $C_k$-only collapse to $0.61$ and $0.60$ respectively---a $20$\,pp drop. $D_w$ is a divergence between probability distributions and is metric-agnostic; $V$ (hyperbolic velocity of the Fr\'echet mean) and $C_k$ (neighbourhood churn in the embedding) are metric-dependent and reward hyperbolic structure. The $1/3,1/3,1/3$ default composite is consequently tuned to the Poincar\'e geometry and would benefit from geometry-specific re-tuning in a Euclidean deployment.

\textbf{(e)~The three indicators locate three distinct diffusion events.} At $r{=}200$ the median $\sigma^*$ separates cleanly across single-indicator configurations: $D_w$-only at $\sigma^* \approx 2.7$ (earliest), $V$-only at $\sigma^* \approx 5.6$ (close to the full composite's $\sigma^* \approx 6.1$), and $C_k$-only at $\sigma^* \approx 14.9$ (latest). The full composite locks onto the $V$-driven mode since its score is dominated by $V$ and $D_w$ at the relevant $\sigma$ band, while $C_k$ contributes a separate later peak that the composite passes over. The three events---weight-distribution shift, Fr\'echet displacement, neighbourhood churn---occur in different phases of the diffusion. The composite therefore does not collapse to a single dominant indicator and offers \emph{bounded downside} rather than a multiplicative gain.

\textbf{(f)~Peak-picker defaults are robust across all regimes and geometries.} The MAD multiplier $\beta \in \{1,2,3\}$ changes the number of detected peaks but not the top-scoring one at any $(r, \text{config})$ combination across our $7\,r \times 50$-seed lax/strict sweep, supporting the ``data-driven resolution discovery'' framing of \S\ref{sec:discussion}.

\begin{figure}[h]
\centering
\includegraphics[width=0.95\columnwidth]{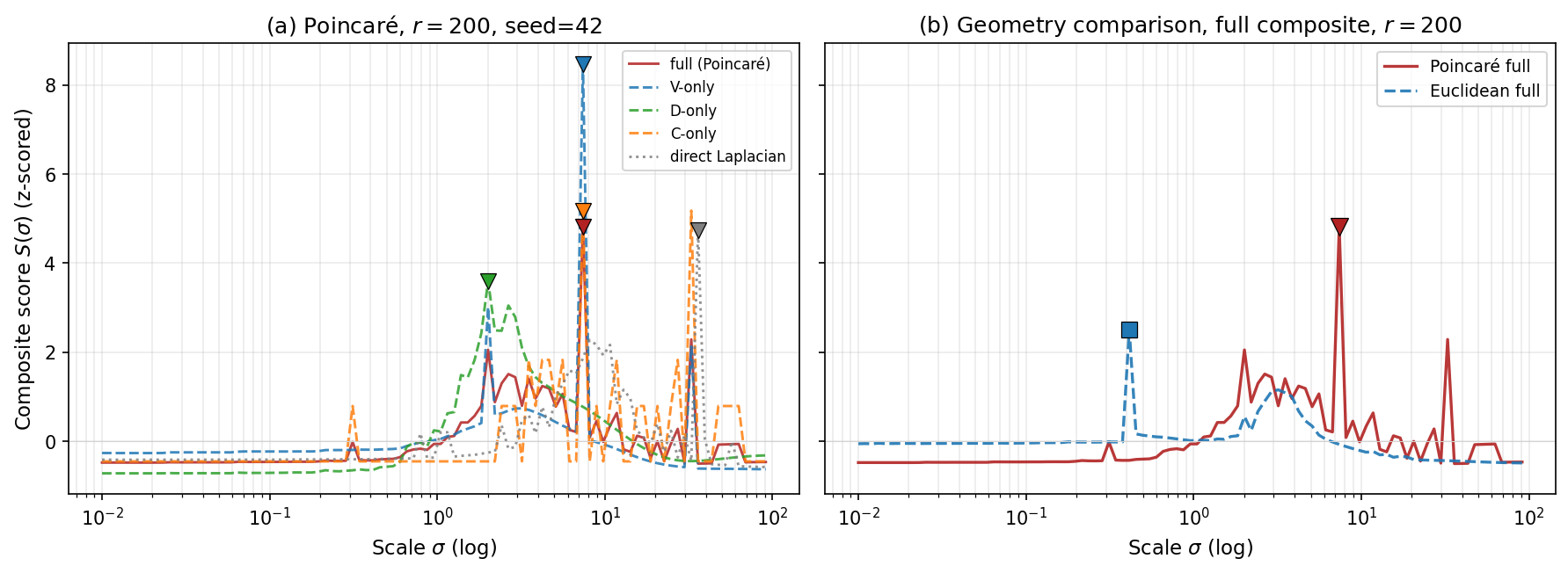}
\caption{Composite-score profiles $S(\sigma)$ at $r{=}200$, seed 42, with the top-scoring peak of each configuration marked. \textbf{(a)~Indicator ablation under Poincar\'e embedding (single seed=42):} $V$-only and the full composite share a dominant peak at $\sigma^*\!\approx\!7$, while $D_w$-only has a distinct earlier peak at $\sigma^*\!\approx\!2$. $C_k$-only at this seed coincides with the $V$-only peak; the multi-seed median analysis in takeaway~(e) shows $C_k$-only's typical peak is later ($\sigma^*\!\approx\!14.9$). The direct-Laplacian variant picks a late $\sigma^*\!\approx\!36$ on a different Laplacian spectrum. \textbf{(b)~Geometry comparison for the full composite:} the Euclidean-MDS Laplacian has a useful scale range more than an order of magnitude lower than the Poincar\'e one, consistent with takeaway~(b).}
\label{fig:ablation_profiles}
\end{figure}

\section{Related Work}\label{sec:related}

\paragraph{Hyperbolic Embeddings and Graph Neural Networks.}
The Poincar\'e ball embeds tree-structured hierarchies with $(1{+}\varepsilon)$ distortion \cite{sarkar2011low, nickel2017poincare}, enabling tree-aware computations that Euclidean spaces cannot support. Hyperbolic GNNs \cite{ganea2018hyperbolic, chami2019hyperbolic} extend this to learnable architectures that outperform Euclidean counterparts on hierarchical tasks. Our work instead introduces a non-learnable operator---heat kernel diffusion on the Poincar\'e ball---that exploits hyperbolic geometry for coherent multi-scale aggregation without requiring training.

\paragraph{Community Detection.}
Classical community detection includes modularity-based methods such as Louvain \cite{blondel2008fast} and Leiden \cite{traag2019louvain}, whose partitions depend on a user-chosen resolution and are subject to the modularity resolution limit \cite{fortunato2007resolution}, as well as information-theoretic methods such as Infomap \cite{rosvall2008maps}, which likewise produce discrete partitions but are not framed through a $\gamma$-style modularity parameter. Hierarchical stochastic block models \cite{peixoto2014hierarchical} provide model-selection over resolution but still yield discrete partitions at each level. SLoD departs from this family by defining a \emph{continuous} operator whose output at any scale $\sigma$ is a well-defined representation rather than a partition; meaningful levels emerge from spectral structure rather than from maximizing a modularity-style objective.

\paragraph{Scale-Space and Multi-Scale Graph Analysis.}
Scale-space theory \cite{lindeberg1994scale} formalized heat diffusion as a principled multi-resolution smoothing family satisfying causality axioms. Its graph analogue appears in spectral wavelets \cite{hammond2011wavelets}, diffusion maps \cite{coifman2006diffusion}, and Heat Kernel Signatures \cite{sun2009concise} for shape analysis. Closest in spirit is Markov Stability \cite{delvenne2010stability, lambiotte2014random}, which interprets community structure as a function of random-walk time---formally identical to SLoD's diffusion scale $\sigma$ on the original graph (the heat kernel $\exp(-\sigma L)$ is the continuous-time random-walk transition). SLoD differs on three measured axes. \textbf{Substrate:} Markov Stability uses the combinatorial Laplacian on the original graph; SLoD uses the kNN-of-Poincar\'e-embedding Laplacian, which empirically improves meso recovery by $9$--$18$\,pp at $r{\ge}60$ (Table~\ref{tab:ablation_meso}, direct-Laplacian row). \textbf{Scale selection:} Markov Stability optimises a stability functional over partitions; SLoD's BoundaryScan composes a focus-driven indicator triple $(V, D_w, C_k)$ on a continuous representation trajectory, yielding emergent boundaries from spectral gaps rather than partition optimisation. \textbf{Output:} Markov Stability produces discrete partitions per scale; SLoD produces a continuous Fr\'echet-centroid trajectory $\sigma \mapsto \Phi_\sigma$ with the option of fixed-$K$ spectral clustering at detected boundaries. Composing this scale-space philosophy with hyperbolic geometry's low-distortion tree embedding and with the emergent boundary mechanism is the combination not previously explored in graph-based AI systems.

\paragraph{Graph-Based Retrieval and Agent Memory.}
Graph-based retrieval systems such as GraphRAG \cite{edge2024graphrag} organize documents into hierarchical community summaries for multi-level querying; persistent agent memory systems \cite{packer2023memgpt} and recent surveys \cite{zhang2024memory} inherit this pattern. All inherit the discrete-partition paradigm of community detection, committing to a fixed resolution at indexing time. Hybrid retrieval approaches such as HyDE \cite{gao2023hyde} operate at the token/embedding level and do not exploit graph structure. SLoD provides a complementary layer: a continuous resolution operator upstream of either graph or token retrievers, enabling smooth traversal of the abstraction hierarchy at query time with formal coherence guarantees (Theorem~\ref{thm:coherence}).

\section{Discussion and Conclusion}\label{sec:discussion}

SLoD addresses a gap in graph-based AI systems: the absence of principled, continuous resolution control. We conclude by highlighting implications for the graphs-across-AI community and open questions.

\paragraph{Data-Driven Resolution Discovery.}
In the tested hierarchical regimes (HSBM, WordNet), SLoD removes the need for a per-level Leiden $\gamma$ sweep. Where Leiden requires sweeping $\gamma$ (and suffers from Louvain's resolution limit \cite{fortunato2007resolution}), the spectral boundary mechanism recovers multiple meaningful scales from a single sweep. An agent can query at any $\sigma$ and receive a representation at the corresponding abstraction level---without resolution-parameter tuning. Internal aggregation weights ($\alpha_1, \alpha_2, \alpha_3$) and the MAD peak-picking threshold use defaults (Algorithm~\ref{alg:boundary}) validated on HSBM and WordNet; their transferability to novel domains remains to be established.

\paragraph{Graph-Native Abstraction for Retrieval.}
Current GraphRAG pipelines use discrete community detection for multi-level summarization. SLoD offers continuous zoom: retrieval at the optimal abstraction for each query. The coherence guarantee (Theorem~\ref{thm:coherence}) ensures that nearby scales produce semantically related representations, enabling smooth traversal of the abstraction hierarchy during retrieval.

\paragraph{Deployment in Agent Memory Systems.}
We outline how SLoD plugs into a graph-based agent memory pipeline.
\textbf{(1)}~The agent accumulates experience into a knowledge graph $G$;
\textbf{(2)}~an offline pass builds a Poincar\'e embedding and runs BoundaryScan (Algorithm~\ref{alg:boundary}) to produce an \emph{abstraction index} comprising boundary scales $\{\sigma^*_i\}$, effective dimensionalities $\{K^*_i\}$, and per-scale Fr\'echet centroids;
\textbf{(3)}~at query time, the agent maps a query $q$ to a target scale $\sigma_q$ and retrieves from the nearest $\sigma^* \in \{\sigma^*_i\}$ via the centroid or the $K^*$-cluster at that scale;
\textbf{(4)}~usage updates edge weights (e.g., via Hebbian co-activation), which trigger periodic re-indexing.
SLoD is thus a natural drop-in for the hierarchical indexing layer of graph-based retrieval systems such as GraphRAG \cite{edge2024graphrag}, where Leiden's discrete community partitions could be replaced by continuous SLoD boundaries. How the agent picks $\sigma_q$ is itself a research question: from an information-theoretic / resource-budget perspective \cite{tishby2000information, friston2010free}, $\sigma_q$ should scale with query complexity under a cognitive budget---broad contextual queries favour larger $\sigma$ (more aggregation), precise-recall queries favour smaller $\sigma$ (less smoothing). A principled derivation of this mapping is beyond the scope of this discussion paper and is flagged as a key open problem for deployment.

\paragraph{Where the Hyperbolic Geometry Enters.}
Experiment 3 localises the contribution of the Poincar\'e substrate. The Laplacian whose spectrum drives BoundaryScan is built from hyperbolic-distance kNN edges; replacing those with Euclidean-distance kNN loses meso ARI by an $r$-dependent margin in the $4$--$10$\,pp range at $r{\ge}80$, peaking at $-10$\,pp at $r{=}150$ (Table~\ref{tab:ablation_meso}; the across-cell paired effect is $5.7$\,pp with BCa $95\%$ CI $[4.8, 6.7]$\,pp, see \S\ref{sec:ablation} takeaway~(b)). In contrast, the Poincar\'e \emph{coordinates} used for the Fr\'echet mean are largely interchangeable with i.i.d.\ Gaussian noise at the same operating points (takeaway~(a) of \S\ref{sec:ablation}). This is a sharper positioning rather than a weaker one: the value of the hyperbolic embedding is concentrated in the single upstream step of neighbourhood definition, while the downstream Fr\'echet mean remains the principled Poincar\'e-ball summary object that retrieval systems query at the selected scale. Theorem~\ref{thm:coherence} is a statement about the \emph{trajectory} $\sigma \mapsto \Phi_\sigma$ of the Fr\'echet mean; no analogous coherence guarantee holds under random aggregation coordinates, even in regimes where their single-scale ARI matches. For deployment, this means the Laplacian (or its top-$K_{\mathrm{eigs}}$ eigenpairs) is the object that needs to be shipped and refreshed; the raw point cloud is not a query-time dependency at most operating points.

\paragraph{Spectral Gaps as Structural Fingerprints.}
The detected scale boundaries encode intrinsic graph structure independent of any particular embedding. This has implications for graph neural architectures: spectral boundary positions could inform multi-scale pooling strategies, replacing hand-designed hierarchical pooling with data-driven scale selection.

\paragraph{Robustness Across Density Regimes.}
We observe that Sammon embedding quality $Q_S$ decreases monotonically as HSBM density grows (from $Q_S{=}0.783$ at $r{=}20$ to $Q_S{=}0.726$ at $r{=}200$, a $7.3\%$ drop), consistent with the general fact that denser graphs have shorter diameters and more cycles and are therefore harder to embed with low distortion into a tree-like hyperbolic space~\cite{sarkar2011low}. Yet the direct-Laplacian variant of BoundaryScan achieves macro ARI${=}1.00$ at $r{=}200$ (Table~\ref{tab:exp1_ari}): macro recovery does not require a faithful embedding. At meso scale the direct variant costs 21\,pp at $r{=}200$ (Table~\ref{tab:ablation_meso}), so the embedding still contributes at finer resolution. This refines the broader claim: SLoD's macro-scale mechanism decouples from embedding fidelity, while meso-scale recovery still benefits from it---a property useful for dense real-world knowledge graphs where even a lossy embedding is better than none for fine-grained retrieval.

\paragraph{Formal Guarantees for Trustworthy Retrieval.}
The bounded approximation error $O(\sigma)$ and $(1{+}\varepsilon)$ coherence bound provide guarantees that heuristic community detection methods lack. For applications requiring trustworthy retrieval---medical knowledge, legal reasoning, safety-critical systems---such guarantees are essential.

\paragraph{Scalability and Approximation.}
Our implementation uses Lanczos partial eigendecomposition (\S\ref{sec:method}), costing $O(N K_{\mathrm{eigs}}^2)$ time and $O(N K_{\mathrm{eigs}})$ memory with $K_{\mathrm{eigs}}$ retained eigenvalues. We empirically validate this scaling on HSBM graphs across $N \in [978, 15{,}771]$ nodes (post--largest-connected-component filter, from requested sizes $\{1024, 2048, 4096, 8192, 16384\}$) with a fixed hierarchy ($2{\times}4{\times}8$ micro-communities) and $K_{\mathrm{eigs}}{=}50$: BoundaryScan wall-clock scales as $N^{0.77}$ and peak Python allocation as $N^{0.98}$ in log--log fits (Fig.~\ref{fig:scaling}), consistent with the linear factor in $N$ for fixed $K_{\mathrm{eigs}}$. This was practical on WordNet ($N{\approx}82{,}000$) at $K_{\mathrm{eigs}}{=}50$, but the approach is not intrinsic to the framework. For larger graphs, three standard approximation paths apply: (i) Chebyshev polynomial expansion of $\exp(-\sigma L)$ as used for graph wavelets \cite{hammond2011wavelets}, reducing cost to $O(m d_{\mathrm{cheb}})$ where $m$ is the edge count and $d_{\mathrm{cheb}}$ is the polynomial degree (distinct from $K_{\mathrm{eigs}}$); (ii) Nystr\"om-style spectral sampling from a subset of nodes; (iii) localized random-walk approximations that compute a focus-specific heat kernel without materializing the global spectral decomposition. A systematic evaluation at Wikidata scale ($N{>}10^7$) is left to future work. All timings reported here were measured single-threaded on an Apple M-series CPU with 16\,GB RAM; Lanczos eigendecomposition is the dominant cost at our $K_{\mathrm{eigs}}$.

\begin{figure}[h]
\centering
\includegraphics[width=0.95\columnwidth]{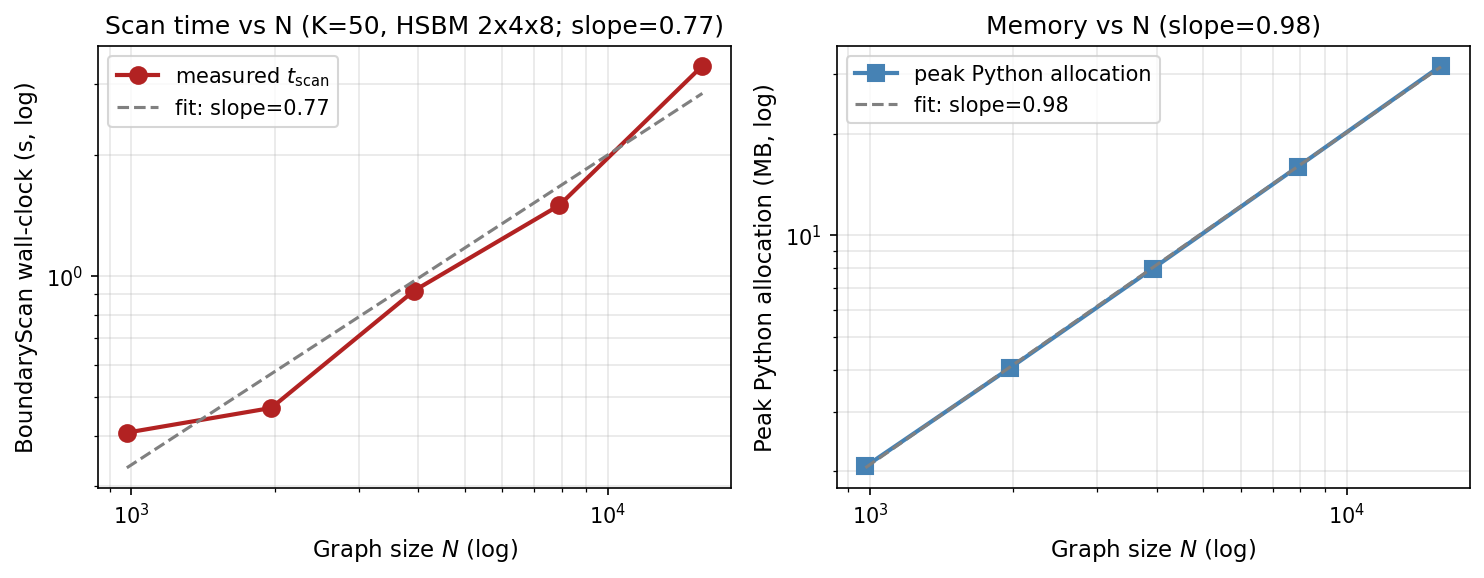}
\caption{BoundaryScan scaling on HSBM with a fixed $2{\times}4{\times}8$ hierarchy and $K_{\mathrm{eigs}}{=}50$. \textbf{Left:} wall-clock scan time, log--log slope $0.77$ (the sub-linear trend at small $N$ is constant-overhead-dominated). \textbf{Right:} peak Python allocation (measured via \texttt{tracemalloc}), log--log slope $0.98$. Both consistent with $O(N K_{\mathrm{eigs}}^2)$ time and $O(N K_{\mathrm{eigs}})$ memory (Lanczos partial eigendecomposition) for fixed $K_{\mathrm{eigs}}$.}
\label{fig:scaling}
\end{figure}

\paragraph{Limitations --- Non-Hierarchical Structures.}
SLoD's coherence guarantee (Theorem~\ref{thm:coherence}) depends on a low-distortion Sarkar embedding of a tree-like graph. We frame Theorem~\ref{thm:coherence} as a result that establishes the coherence property in an idealized setting (exact tree, Sarkar embedding); the empirical results on HSBM (noisy hierarchy) and WordNet (DAG with multiple inheritance, Nickel--Kiela embedding) provide evidence that the property holds approximately under the noise present in these realistic regimes. On graphs without hierarchical structure, three regimes emerge. \textbf{(a)}~\emph{Flat graphs} with a single community scale produce a degenerate BoundaryScan output---one boundary at $K^*{=}|\text{communities}|$, with no further hierarchy. This is the expected behaviour and is not a failure mode. \textbf{(b)}~\emph{Overlapping communities} violate the disjoint-partition assumption underlying the graph Laplacian spectrum, and may yield spurious boundaries; the $(1{+}\varepsilon)$ coherence bound degrades as Sarkar distortion grows. \textbf{(c)}~\emph{Random graphs} (Erd\H{o}s--R\'enyi) lack spectral gaps entirely; the MAD-threshold peak picker should return an empty boundary set, though the fallback "highest local maximum" rule can produce a single false positive. Formal characterization of these degradation modes---and automatic detection of the applicable regime from spectral signatures---remains open.

\paragraph{Scope --- Explicit vs.\ Latent Hierarchy.}
The validation regime here is the \emph{explicit-hierarchy} setting: HSBM with planted levels; WordNet with curated taxonomic structure and a high-quality Poincar\'e embedding (HP$\,{=}\,0.994$, SP$\,{=}\,0.937$). Many real-world knowledge graphs (corporate KGs, GraphRAG entity graphs) instead carry hierarchy only \emph{implicitly}---structure that must be inferred rather than read off. Whether SLoD's spectral boundary mechanism can surface such latent hierarchy when it is present is an open methodological question, distinct from the (a)/(b)/(c) regimes above which describe behaviour when no meaningful hierarchy is present at all. We expect the answer to depend on whether the upstream embedding step preserves enough hierarchical signal for kNN-graph spectral gaps to encode it; ablation~(b) of \S\ref{sec:ablation} (Euclidean MDS loses meso ARI) is an early hint that embedding choice is consequential here.

\paragraph{Limitations --- Perturbation Robustness.}
Beyond structural assumptions, several perturbation dimensions merit systematic study. \textbf{(a)}~\emph{Edge noise} (random add/remove): Theorem~\ref{thm:coherence} provides a partial bound via Sarkar distortion $\varepsilon$, but empirical characterization is open. \textbf{(b)}~\emph{Embedding noise}: the bootstrap CI for Kendall $\tau$ on WordNet (Table~\ref{tab:exp2_metrics}, $[0.75, 0.83]$) captures focus-node variance at one embedding seed; cross-seed variance of the embedding itself is unmeasured. \textbf{(c)}~\emph{Clustering variance}: the §6.3 bootstrap CIs are computed over graph-generation and MDS-optimisation seeds with $k$-means initialisation pinned (\texttt{random\_state}${=}42$, $n_{\mathrm{init}}{=}10$); cells where two near-degenerate eigenmodes compete for the partition (notably meso $r{\in}\{80,100\}$) may therefore have CI widths that under-report true sampling variance by a factor of $\sim$1.3--2$\times$. The reported intervals should be read as lower bounds on the full sampling distribution. \textbf{(d)}~\emph{Hyperparameter sensitivity}: defaults for $(\alpha_1, \alpha_2, \alpha_3, R, \text{MAD multiplier})$ were set on HSBM and WordNet; a systematic sensitivity table is deferred to the extended version. \textbf{(e)}~\emph{Node subsampling}: relevant for approximate SLoD on large graphs via Nystr\"om-style methods (see Scalability). \textbf{(f)}~\emph{Adversarial perturbations}: minimal edge edits that shift detected boundaries, relevant to deployment security. Dimensions (e) and (f) remain open; (d) is partially addressed by the defaults' stability across two distinct domains (synthetic HSBM, real WordNet).

\paragraph{Open Questions.}
Several directions merit investigation.
\textbf{(1)}~How does SLoD interact with \emph{dynamic graphs} where edges evolve through usage? Coupling with Hebbian co-activation learning and incremental spectral updates is a promising path.
\textbf{(2)}~Can spectral boundary detection guide \emph{GNN architecture design}---e.g., informing layer depth or pooling ratios?
\textbf{(3)}~How do boundary positions shift under \emph{adversarial edge perturbations}?
\textbf{(4)}~Extension to dense knowledge graphs, where the tree assumption no longer holds and coherence bounds degrade.
\textbf{(5)}~\emph{Broader empirical evaluation.} Current validation covers HSBM (synthetic hierarchy) and WordNet (taxonomic DAG). Natural next targets span different structural regimes: citation networks (ogbn-arxiv), biological networks (PPI, Gene Ontology), LFR benchmarks for overlapping communities, and commercial taxonomies (Amazon categories, DBpedia). Each probes an aspect of scale discovery that current experiments do not address.
\textbf{(6)}~\emph{Benchmarking against modern retrieval systems.} Retrieval-augmented systems such as GraphRAG \cite{edge2024graphrag} and HyDE \cite{gao2023hyde} operate at a different abstraction than spectral methods. A head-to-head evaluation on QA benchmarks (LoCoMo, NaturalQuestions, MS\,MARCO) would clarify when SLoD-driven hierarchical retrieval complements vs.\ competes with LLM-based methods. This requires building the full retrieval pipeline on top of SLoD boundaries and is deferred to a systems paper.
\textbf{(7)}~\emph{Direct Fr\'echet-coordinate equivalence test.} The random-points control of \S\ref{sec:ablation} establishes that $\sigma^*$ selection is robust to the aggregation point cloud, but ARI is a clustering metric on Laplacian eigenvectors and does not directly probe Fr\'echet-mean similarity. The natural follow-up is the trajectory distance $\|\Phi_\sigma^{\mathrm{default}} - \Phi_\sigma^{\mathrm{random}}\|_{\HH}$ as a function of $\sigma$, which would quantify when (and how strongly) the Poincar\'e coordinate cloud is load-bearing for the centroid summary at each scale.
\textbf{(8)}~\emph{Multi-focus aggregation.} All ablation rows use focus node $0$. A stratified sweep across multiple foci would probe whether the reported $\sigma^*$ medians are focus-specific or graph-level signatures.

\medskip
\noindent In summary, Semantic Level of Detail provides a mathematically grounded framework for discovering where meaningful abstraction boundaries lie in knowledge graphs. By letting diffusion on the graph Laplacian find the boundaries automatically, SLoD offers graph-based AI systems a principled alternative to manual resolution tuning.

\section*{Acknowledgments and Disclosures}

The author thanks the two anonymous GRAAI~2026 reviewers for constructive comments that materially improved \S\ref{sec:method} (hyperbolic positioning), \S\ref{sec:ablation} (statistical methodology), and \S\ref{sec:related} (Markov Stability differentiation). The author is a founder of Mnemoverse.AI; this work is part of the company's research program on graph-based AI memory systems. The manuscript was drafted with assistance from large language models; all methodology, experiments, claims, and responsibility for content are the author's. Source code and reproducibility scripts: \url{https://github.com/mnemoverse/mnemoverse-slod-paper} (archived at \url{https://doi.org/10.5281/zenodo.19933482}).

\appendix

\section{Proof of Theorem~\ref{thm:coherence}}\label{app:coherence}

\noindent\emph{Note on constants.} The constants in Lemmas~A.1 and~A.2 below are stated explicitly: $L_F = D/2$ for the Fr\'echet step (sharp; valid on every Hadamard manifold, attained at $N{=}2$), and $L_w = 2\,\|L\|_{1\to 1}$ for the heat-kernel weight step (where $\|L\|_{1\to 1} = \max_j \sum_i |L_{ij}|$ is the induced $\ell^1\to\ell^1$ operator norm of the graph Laplacian). An alternative $L_w = \lambda_{\max}$ form (used informally in some wavelet treatments \cite{hammond2011wavelets} for regular graphs) does not generalise to graphs with non-trivial degree variance: e.g.\ on a star graph $K_{1,n}$, the actual $L_w$ scales as $\sqrt{n}$ at the hub focus while $\lambda_{\max} \le 2$. The $\|L\|_{1\to 1}$ form used here captures the kNN-graph-conditioning dependence and reduces to a small constant for bounded degree ratio.

We restate the theorem for convenience.

\smallskip
\noindent\textbf{Theorem~\ref{thm:coherence}.} \emph{Let $\mathcal{T}$ be a weighted tree of bounded maximum degree $d_T$, embedded in $\BB$ via Sarkar embedding with distortion $\delta = (1{+}\varepsilon)$, and let $L$ be the symmetric normalized Laplacian of the kNN graph on the embedded points (with bounded kNN degree ratio so that $\|L\|_{1\to 1}$ is a small constant). For any node $v$ with subtree-edge diameter $D_{\mathcal{T}}(v)$ and $\sigma_1 < \sigma_2$,}
\[
  \dH\!\left(\Phi_{\sigma_1}(\mathcal{V}_v, v),\ \Phi_{\sigma_2}(\mathcal{V}_v, v)\right)
  \;\le\; C \cdot |\sigma_2 - \sigma_1| \cdot (1{+}\varepsilon),
\]
\emph{where $\mathcal{V}_v$ is the set of descendants of $v$ and $C = \ell\,\|L\|_{1\to 1}\,D_{\mathcal{T}}(v)$, with $\ell = \ell(d_T,\varepsilon)$ the Sarkar base edge length and $\|L\|_{1\to 1} := \max_j \sum_i|L_{ij}|$ the maximum absolute column sum of $L$.}

The proof proceeds in two Lipschitz steps: (i)~the Fr\'echet mean $\Phi$ is $D/2$-Lipschitz in its weights $w$ with respect to the $\ell^1$ norm on the simplex, sharply on every Hadamard manifold (Lemma~A.1); (ii)~the heat-kernel weights $w(\sigma)$ are $2\,\|L\|_{1\to 1}$-Lipschitz in $\sigma$ on the simplex, by direct $\ell^1$ analysis of the heat semigroup (Lemma~A.2).

\subsection*{Lemma A.1 (Lipschitz dependence of the Fr\'echet mean on weights)}

Let $\mathcal{V} = \{v_i\}_{i=1}^N \subset \BB$ be a finite set with $\mathrm{diam}(\mathcal{V}) \le D$. For weights $w, w'$ on the simplex $\Delta^{N-1}$, the Fr\'echet means
\[
  \Phi(w) := \argmin_{y \in \BB}\, \sum_{i=1}^{N} w_i\, \dH^2(y, v_i)
\]
satisfy
\begin{equation}\label{eq:appA-frechet-lipschitz}
  \dH\!\left(\Phi(w),\ \Phi(w')\right) \;\le\; L_F \cdot \|w - w'\|_1,
\end{equation}
where $L_F = D / 2$ for the Poincar\'e ball $\BB$.

\smallskip
\noindent\emph{Proof.} The proof uses only the Hadamard property (simply connected, complete, sectional curvature $\le 0$); the Poincar\'e ball with $\kappa \equiv -1$ is the relevant special case. On any Hadamard manifold, $F_w$ is strictly geodesically convex \cite{karcher1977riemannian,sturm2003probability,afsari2011riemannian} and the minimizer $\Phi(w)$ is unique. Set $\phi_i(y) := \tfrac{1}{2}\dH^2(y, v_i)$ and work with $\widetilde F_w := \sum_i w_i \phi_i = \tfrac{1}{2} F_w$, whose unique minimizer is again $\Phi(w)$.

\smallskip
\noindent\emph{Step 1 (Hessian lower bound).} For each $v_i$ and each $y \in \BB$ with $r_i := \dH(y, v_i) > 0$, the Hessian of $\phi_i$ on $T_y \BB$ has the explicit form (Karcher \cite{karcher1977riemannian}, \S6; the equivalent strong-convexity / variance form appears in Sturm \cite{sturm2003probability}, Prop.~4.4)
\[
  \mathrm{Hess}\,\phi_i(\xi, \xi) \;=\; \|\xi_\parallel\|^2 + r_i \coth(r_i)\, \|\xi_\perp\|^2,
\]
where $\xi = \xi_\parallel + \xi_\perp$ decomposes along the radial direction $\log_y(v_i)/r_i$ and its orthogonal complement (interpreted as $\xi_\perp = 0$ in the limiting case $r_i = 0$, where the radial direction is undefined and the formula reduces to $\|\xi\|^2$). Since $r \coth(r) \ge 1$ for all $r \ge 0$, $\mathrm{Hess}\,\phi_i \succeq I$ pointwise, and convex combination yields
\begin{equation}\label{eq:appA-hessian-lower}
  \mathrm{Hess}_y\,\widetilde F_w(\Phi(w)) \;=\; \sum_i w_i\,\mathrm{Hess}_y\,\phi_i(\Phi(w)) \;\succeq\; I.
\end{equation}
The lower bound is curvature-independent and tight in the radial direction; negative curvature only strengthens the tangential direction.

\smallskip
\noindent\emph{Step 2 (CAT(0) angle inequality).} For any base point $y \in \BB$ and $p, q \in \BB$,
\begin{equation}\label{eq:appA-cat0}
  \|\log_y(p) - \log_y(q)\|_{T_y \BB} \;\le\; \dH(p, q).
\end{equation}
This is the standard CAT(0) comparison: the Euclidean comparison triangle with sides $a = \dH(y,p)$, $b = \dH(y,q)$, $c = \dH(p,q)$ has angle $\bar\alpha$ at $y$ with $\cos\bar\alpha = (a^2{+}b^2{-}c^2)/(2ab)$; non-positive curvature gives manifold angle $\alpha \le \bar\alpha$ (Sturm \cite{sturm2003probability}, \S2), so by the law of cosines $\|\log_y p - \log_y q\|^2 = a^2 + b^2 - 2ab\cos\alpha \le c^2$. Applied at $y = \Phi(w)$, the tangent set $\{\log_\Phi(v_i)\}_{i=1}^N$ has tangent diameter at most $\mathrm{diam}_{\dH}(\mathcal{V}) \le D$.

\smallskip
\noindent\emph{Step 3 (implicit function theorem).} The first-order optimality condition $\nabla_y \widetilde F_w = -\sum_i w_i \log_y(v_i) = 0$ at $y = \Phi(w)$ defines $\Phi$ implicitly. Differentiating in $w \in \Delta^{N-1}$, on the simplex tangent $T \Delta^{N-1} = \{dw \in \RR^N : \sum_i dw_i = 0\}$,
\[
  \mathrm{Hess}\,\widetilde F_w(\Phi)\cdot d\Phi \;=\; \sum_i \log_\Phi(v_i)\, dw_i.
\]
Since $\sum_i dw_i = 0$, decompose $dw = dw_+ - dw_-$ with $dw_\pm \ge 0$ and $\sum_i dw_+^{(i)} = \sum_i dw_-^{(i)} = \tfrac{1}{2}\|dw\|_1$. Writing the right-hand side as $\tfrac{1}{2}\|dw\|_1\,(\bar a_+ - \bar a_-)$ with $\bar a_\pm$ the corresponding probability-weighted barycenters in $T_\Phi \BB$ of $\{\log_\Phi v_i\}$, \eqref{eq:appA-cat0} gives $\|\bar a_+ - \bar a_-\|_{T_\Phi \BB} \le D$, hence
\[
  \Big\|\sum_i \log_\Phi(v_i)\, dw_i\Big\|_{T_\Phi \BB} \;\le\; \tfrac{D}{2}\,\|dw\|_1.
\]
Combining with \eqref{eq:appA-hessian-lower}, $\|[\mathrm{Hess}\,\widetilde F_w(\Phi)]^{-1}\|_{op} \le 1$, so
\begin{equation}\label{eq:appA-pointwise}
  \|d\Phi\|_{T_\Phi \BB} \;\le\; \tfrac{D}{2}\,\|dw\|_1.
\end{equation}

\smallskip
\noindent\emph{Step 4 (path integration).} Joining $w$ to $w'$ by the segment $w(t) := (1{-}t)w + t w'$ in $\Delta^{N-1}$, the curve $t \mapsto \Phi(w(t))$ has Riemannian length bounded by integrating \eqref{eq:appA-pointwise}, and Riemannian distance is bounded by length:
\[
  \dH\!\left(\Phi(w), \Phi(w')\right) \;\le\; \int_0^1 \big\|\tfrac{d\Phi}{dt}\big\|_{T_{\Phi(t)} \BB}\, dt \;\le\; \tfrac{D}{2}\,\|w-w'\|_1,
\]
which is \eqref{eq:appA-frechet-lipschitz} with $L_F = D/2$.

\smallskip
\noindent\emph{Sharpness and curvature independence.} The constant $D/2$ is attained at $N = 2$ on every Hadamard manifold, including $\BB$. For $v_1, v_2$ at hyperbolic distance $D$, $\Phi(w_1, w_2)$ lies on the geodesic $\gamma$ from $v_1$ to $v_2$ at arclength parameter $s^*(w) = w_2 D$, by minimization of the (Euclidean, in the arclength variable) parabola $w_1 s^2 + w_2 (D{-}s)^2$. Hence $\dH(\Phi(w), \Phi(w')) = D|w_2 - w_2'|$ while $\|w-w'\|_1 = 2|w_2 - w_2'|$, giving Lipschitz ratio exactly $D/2$. The construction is independent of $\kappa$: only that $v_1, v_2$ lie on a unique geodesic of length $D$ enters. \hfill$\square$

\subsection*{Lemma A.2 (Lipschitz dependence of heat-kernel weights on $\sigma$)}

Let $L = I - D^{-1/2} W D^{-1/2}$ be the symmetric normalized Laplacian of the (Gaussian or binary) kNN graph on $\mathcal{V}$, and let
\[
  \|L\|_{1\to 1} \;:=\; \max_{j} \sum_{i} |L_{ij}|
\]
denote its induced $\ell^1\to\ell^1$ operator norm (the maximum absolute column sum). The heat-kernel weights $w(\sigma) = u(\sigma)/\|u(\sigma)\|_1$ with $u(\sigma) = e^{-\sigma L}\,\mathbf{e}_{x_0}$ from focus $x_0$ (Definition~\ref{def:weights}) satisfy
\begin{equation}\label{eq:appA-weights-lipschitz}
  \| w(\sigma_2) - w(\sigma_1) \|_1 \;\le\; 2\,\|L\|_{1\to 1}\,|\sigma_2 - \sigma_1|.
\end{equation}
For OR-symmetrized kNN graphs with degree ratio $d_{\max}/d_{\min} \le \rho_d$ (distinct from the spectral gap ratio $\rho_k$ of Proposition~\ref{prop:spectral}), $\|L\|_{1\to 1} \le 1 + \sqrt{\rho_d}$, so $L_w \le 2(1+\sqrt{\rho_d})$; in particular $L_w \le 2(1+\sqrt{2})$ for kNN graphs whose degrees lie in $[k, 2k]$ (the SLoD operating regime, \S\ref{sec:method}).

\smallskip
\noindent\emph{Proof.} Since $A := D^{-1/2} W D^{-1/2}$ is non-negative entrywise, $e^{-\sigma L} = e^{-\sigma}\sum_{m\ge 0}(\sigma A)^m/m! \ge 0$ entrywise, hence $u(\sigma) \ge 0$ on a connected graph. Set $Z(\sigma) := \mathbf{1}^T u(\sigma) = \|u(\sigma)\|_1 > 0$.

\smallskip
\noindent\emph{Step 1 (chain rule on the simplex normalization).} Differentiating $w_i = u_i/Z$,
\[
  \dot w_i \;=\; \frac{\dot u_i}{Z} - \frac{u_i\,\dot Z}{Z^2} \;=\; \frac{\dot u_i - w_i\,\dot Z}{Z}.
\]
Using $\|w\|_1 = 1$ and $|\dot Z| = |\mathbf{1}^T \dot u| \le \|\dot u\|_1$,
\begin{equation}\label{eq:appA-w-dot}
  \|\dot w(\sigma)\|_1 \;\le\; \frac{\|\dot u\|_1 + |\dot Z|\cdot \|w\|_1}{Z} \;\le\; \frac{2\,\|\dot u\|_1}{Z}.
\end{equation}

\smallskip
\noindent\emph{Step 2 (heat dynamics in $\ell^1$).} The heat equation gives $\dot u = -L u$. For $u \ge 0$,
\[
  \|L u\|_1 \;=\; \sum_i \Big|\sum_j L_{ij}\,u_j\Big| \;\le\; \sum_j u_j \sum_i |L_{ij}| \;\le\; \|L\|_{1\to 1}\,\|u\|_1 \;=\; \|L\|_{1\to 1}\,Z(\sigma).
\]

\smallskip
\noindent\emph{Step 3 ($Z$ cancels).} Substituting Step 2 into \eqref{eq:appA-w-dot},
\[
  \|\dot w(\sigma)\|_1 \;\le\; 2\,\|L\|_{1\to 1}.
\]
Integrating in $\sigma$ yields \eqref{eq:appA-weights-lipschitz}. The key cancellation is that the $1/Z$ growth from normalization at small $\sigma$ (when mass is concentrated on the focus) is exactly offset by the $Z$-proportional decay of $\|L u\|_1$, so the bound is uniform in $\sigma$ and depends only on graph structure.

\smallskip
\noindent\emph{Step 4 (degree-ratio bound on $\|L\|_{1\to 1}$).} For column $j$,
\[
  \sum_i |L_{ij}| \;=\; 1 + \frac{1}{\sqrt{d_j}}\sum_{i \sim j}\frac{w_{ij}}{\sqrt{d_i}} \;\le\; 1 + \frac{1}{\sqrt{d_j}}\cdot\frac{d_j}{\sqrt{d_{\min}}} \;=\; 1 + \sqrt{d_j/d_{\min}},
\]
hence $\|L\|_{1\to 1} \le 1 + \sqrt{d_{\max}/d_{\min}}$. \hfill$\square$

\smallskip
\noindent\emph{Remark on the spectral $\lambda_{\max}$ form.} An alternative spectral estimate $L_w = \lambda_{\max}$ is sometimes used in graph-wavelet treatments \cite{hammond2011wavelets} where the underlying graphs are regular or near-regular; the bound $\lambda_{\max} \le 2$ for normalized Laplacians then yields $L_w \le 2$. This form holds only when the spectral and $\ell^1$ operator norms of $L$ approximately coincide. On graphs with high degree variance the two diverge: e.g.\ on the star $K_{1,n}$, $\|L\|_{1\to 1} = 1 + \sqrt{n}$ while $\lambda_{\max} = 2$, and a direct evaluation of $\|\dot w\|_1$ at the hub focus near $\sigma = 0$ gives $\|\dot w\|_1 = 2\sqrt{n}$. Concretely, at $n=100$, $\|\dot w\|_1 \approx 20$, exceeding both $\lambda_{\max}{=}2$ and the alternative bound $2\lambda_{\max}/\Sigma_{\min} = 4$ by an order of magnitude; the corrected bound $2\|L\|_{1\to 1} = 2(1+\sqrt{100}) = 22$ holds with little slack (numerical verification: \texttt{scripts/verify\_appendix\_a.py}). The kNN-graphs SLoD uses in practice are well-conditioned (degree ratio $\le 2$ from OR-symmetrization), so $\|L\|_{1\to 1} \le 1+\sqrt{2}$ and the Lipschitz constant is $\le 2(1+\sqrt{2}) \approx 4.83$.

\subsection*{Proof of Theorem~\ref{thm:coherence}}

For the Sarkar embedding of $\mathcal{T}$ in $\BB$, each tree edge maps to a hyperbolic arc of fixed length $\ell = \ell(d_T, \varepsilon)$, where $d_T$ is the maximum tree degree and $\ell$ is chosen to achieve distortion $(1{+}\varepsilon)$ \cite{sarkar2011low}. The embedded diameter of any subset of descendants therefore satisfies
\[
  \mathrm{diam}_{\dH}(\mathcal{V}_v \cup \{v\}) \;\le\; \ell\cdot D_{\mathcal{T}}(v)\cdot(1{+}\varepsilon),
\]
where $D_{\mathcal{T}}(v)$ is the tree-edge-count diameter of the subtree rooted at $v$. Substituting into Lemma~A.1, the Fr\'echet Lipschitz constant satisfies $L_F \le \ell\,D_{\mathcal{T}}(v)\,(1{+}\varepsilon)/2$.

Composing Lemmas A.1 and A.2:
\begin{align*}
  \dH\!\left(\Phi_{\sigma_1}, \Phi_{\sigma_2}\right)
    &\le L_F \cdot \| w(\sigma_2) - w(\sigma_1) \|_1 \\
    &\le \tfrac{\ell\,D_{\mathcal{T}}(v)\,(1{+}\varepsilon)}{2} \cdot 2\,\|L\|_{1\to 1}\, |\sigma_2 - \sigma_1| \\
    &= \ell\,\|L\|_{1\to 1}\,D_{\mathcal{T}}(v)\,(1{+}\varepsilon)\, |\sigma_2 - \sigma_1|.
\end{align*}
Setting
\[
  C \;:=\; \ell\,\|L\|_{1\to 1}\,D_{\mathcal{T}}(v)
\]
yields the theorem. The factor $(1{+}\varepsilon)$ is carried explicitly out of $C$ to mark the role of the embedding distortion. \hfill$\blacksquare$

\smallskip
\noindent\emph{Remark (structural decomposition of $C$).} The constant $C = \ell\,\|L\|_{1\to 1}\,D_{\mathcal{T}}(v)$ factors into three structural quantities:
\begin{enumerate}[leftmargin=2em,itemsep=0pt,topsep=2pt]
  \item \emph{Sarkar base edge length} $\ell = \ell(d_T, \varepsilon)$: determined by the tree's maximum degree and the chosen distortion budget. This is the only place the curvature of $\BB$ enters: $\ell$ is fixed by the requirement that tree-edge arcs achieve $(1{+}\varepsilon)$-distortion under the hyperbolic metric \cite{sarkar2011low}.
  \item \emph{kNN-graph conditioning} $\|L\|_{1\to 1} \le 1 + \sqrt{d_{\max}/d_{\min}}$: bounded by $1 + \sqrt{2}$ for OR-symmetrized kNN graphs of bounded degree ratio (the SLoD operating regime; see~\S\ref{sec:method}).
  \item \emph{Subtree diameter} $D_{\mathcal{T}}(v)$ in tree-edge units: bounded by $2 d_v$ where $d_v$ is the depth of the subtree rooted at $v$.
\end{enumerate}
The bound is $N$-independent at fixed subtree-depth budget and bounded degree ratio. Applied at the root of a balanced tree of total depth $h$, $D_{\mathcal{T}}(\mathrm{root}) = O(\log N)$ and $C$ inherits the logarithmic factor; the "coherence at fixed scale" reading of Theorem~\ref{thm:coherence} is per-subtree, where $D_{\mathcal{T}}(v)$ is bounded by the operating depth and $C = O(1)$, which is the regime relevant for hierarchical retrieval at fixed-depth boundaries (\S\ref{sec:boundary}).

\bibliography{slod}

\end{document}

%% file: tables/ablation.tex
\begin{table}[h]
\centering
\caption{Ablation: macro-scale ARI ($K^*{=}2$) on HSBM across $r \in \{20, 40, 60, 80, 100, 150, 200\}$. $N{=}1024$, planted 2{$\times$}4{$\times$}8 hierarchy. Cells are medians over 50 Poincar\'e-MDS seeds with bootstrap (BCa, $n{=}10{,}000$) 95\% CIs; CIs are uncorrected for multiple comparisons across the $8\times7$ ablation grid. \emph{full (Euclidean MDS)} swaps the embedding; \emph{full (binary kNN)} strips Gaussian edge weights; \emph{full (random points)} replaces the Poincar\'e coordinates with i.i.d.\ Gaussian noise while keeping the same Poincar\'e-kNN Laplacian (isolates the role of the points for the Fr\'echet mean step). \emph{direct Laplacian} uses the combinatorial Laplacian of the HSBM graph instead of a kNN-of-embedding Laplacian (isolates the Laplacian construction).}
\label{tab:ablation_macro}
\footnotesize
\resizebox{\columnwidth}{!}{%
\begin{tabular}{@{}lccccccc@{}}
\toprule
Config & $r{=}20$ & $r{=}40$ & $r{=}60$ & $r{=}80$ & $r{=}100$ & $r{=}150$ & $r{=}200$ \\
\midrule
full (Poincaré) & 0.42 [0.38, 0.47] & 0.72 [0.70, 0.74] & 0.85 [0.84, 0.86] & 0.92 [0.91, 0.92] & 0.95 [0.95, 0.96] & 0.99 [0.99, 0.99] & 1.00 [1.00, 1.00] \\
$V$-only & 0.42 [0.38, 0.47] & 0.72 [0.70, 0.74] & 0.85 [0.84, 0.86] & 0.92 [0.91, 0.92] & 0.95 [0.95, 0.96] & 0.99 [0.99, 0.99] & 1.00 [1.00, 1.00] \\
$D_w$-only & 0.40 [0.37, 0.45] & 0.72 [0.70, 0.74] & 0.85 [0.84, 0.86] & 0.91 [0.91, 0.92] & 0.95 [0.95, 0.96] & 0.99 [0.99, 0.99] & 1.00 [1.00, 1.00] \\
$C_k$-only & 0.42 [0.38, 0.46] & 0.71 [0.69, 0.74] & 0.85 [0.84, 0.86] & 0.92 [0.91, 0.92] & 0.95 [0.95, 0.96] & 0.99 [0.99, 0.99] & 1.00 [1.00, 1.00] \\
direct Laplacian & 0.00 [0.00, 0.00] & 0.00 [0.00, 0.68] & 0.84 [0.83, 0.84] & 0.91 [0.91, 0.92] & 0.95 [0.95, 0.96] & 0.99 [0.98, 0.99] & 1.00 [1.00, 1.00] \\
full (Euclidean MDS) & 0.44 [0.41, 0.46] & 0.69 [0.67, 0.70] & 0.81 [0.80, 0.82] & 0.89 [0.88, 0.90] & 0.93 [0.92, 0.93] & 0.98 [0.98, 0.98] & 1.00 [0.99, 1.00] \\
full (binary kNN) & 0.41 [0.35, 0.45] & 0.71 [0.69, 0.74] & 0.85 [0.84, 0.86] & 0.91 [0.91, 0.93] & 0.95 [0.95, 0.96] & 0.99 [0.98, 0.99] & 1.00 [1.00, 1.00] \\
full (random points) & 0.43 [0.37, 0.45] & 0.72 [0.70, 0.74] & 0.85 [0.84, 0.86] & 0.91 [0.91, 0.92] & 0.95 [0.95, 0.96] & 0.99 [0.99, 0.99] & 1.00 [1.00, 1.00] \\
\bottomrule
\end{tabular}%
}
\end{table}

\begin{table}[h]
\centering
\caption{Ablation: meso-scale ARI ($K^*{=}8$) on HSBM across $r \in \{20, 40, 60, 80, 100, 150, 200\}$. $N{=}1024$, planted 2{$\times$}4{$\times$}8 hierarchy. Cells are medians over 50 Poincar\'e-MDS seeds with bootstrap (BCa, $n{=}10{,}000$) 95\% CIs; CIs are uncorrected for multiple comparisons across the $8\times7$ ablation grid. \emph{full (Euclidean MDS)} swaps the embedding; \emph{full (binary kNN)} strips Gaussian edge weights; \emph{full (random points)} replaces the Poincar\'e coordinates with i.i.d.\ Gaussian noise while keeping the same Poincar\'e-kNN Laplacian (isolates the role of the points for the Fr\'echet mean step). \emph{direct Laplacian} uses the combinatorial Laplacian of the HSBM graph instead of a kNN-of-embedding Laplacian (isolates the Laplacian construction).}
\label{tab:ablation_meso}
\footnotesize
\resizebox{\columnwidth}{!}{%
\begin{tabular}{@{}lccccccc@{}}
\toprule
Config & $r{=}20$ & $r{=}40$ & $r{=}60$ & $r{=}80$ & $r{=}100$ & $r{=}150$ & $r{=}200$ \\
\midrule
full (Poincaré) & 0.08 [0.07, 0.09] & 0.18 [0.17, 0.18] & 0.33 [0.31, 0.35] & 0.51 [0.48, 0.53] & 0.61 [0.60, 0.66] & 0.84 [0.80, 0.86] & 0.89 [0.86, 0.92] \\
$V$-only & 0.08 [0.07, 0.09] & 0.18 [0.17, 0.18] & 0.33 [0.31, 0.35] & 0.51 [0.48, 0.53] & 0.61 [0.59, 0.65] & 0.83 [0.78, 0.86] & 0.88 [0.85, 0.90] \\
$D_w$-only & 0.08 [0.07, 0.09] & 0.18 [0.17, 0.18] & 0.33 [0.32, 0.35] & 0.52 [0.49, 0.54] & 0.62 [0.60, 0.66] & 0.81 [0.77, 0.83] & 0.84 [0.81, 0.88] \\
$C_k$-only & 0.09 [0.08, 0.10] & 0.17 [0.17, 0.19] & 0.32 [0.29, 0.36] & 0.50 [0.47, 0.53] & 0.64 [0.59, 0.66] & 0.84 [0.81, 0.86] & 0.86 [0.81, 0.89] \\
direct Laplacian & 0.08 [0.06, 0.09] & 0.16 [0.14, 0.17] & 0.24 [0.21, 0.25] & 0.37 [0.33, 0.40] & 0.51 [0.47, 0.55] & 0.68 [0.60, 0.74] & 0.71 [0.67, 0.76] \\
full (Euclidean MDS) & 0.08 [0.08, 0.08] & 0.17 [0.16, 0.18] & 0.32 [0.29, 0.33] & 0.45 [0.43, 0.48] & 0.57 [0.53, 0.60] & 0.74 [0.71, 0.78] & 0.81 [0.79, 0.83] \\
full (binary kNN) & 0.08 [0.07, 0.09] & 0.18 [0.17, 0.19] & 0.33 [0.31, 0.36] & 0.51 [0.50, 0.53] & 0.64 [0.60, 0.67] & 0.82 [0.77, 0.85] & 0.87 [0.85, 0.90] \\
full (random points) & 0.08 [0.07, 0.09] & 0.18 [0.17, 0.18] & 0.33 [0.32, 0.35] & 0.50 [0.48, 0.52] & 0.65 [0.60, 0.68] & 0.80 [0.76, 0.82] & 0.84 [0.79, 0.89] \\
\bottomrule
\end{tabular}%
}
\end{table}

%% file: slod.bib
@article{afsari2011riemannian,
  author    = {Afsari, Bijan},
  title     = {Riemannian ${L}^p$ Center of Mass: Existence, Uniqueness, and Convexity},
  journal   = {Proc.\ Amer.\ Math.\ Soc.},
  volume    = {139},
  number    = {2},
  pages     = {655--673},
  year      = {2011},
  doi       = {10.1090/s0002-9939-2010-10541-5},
}

@inproceedings{chami2019hyperbolic,
  author    = {Chami, Ines and Ying, Zhitao and R{\'e}, Christopher and Leskovec, Jure},
  title     = {Hyperbolic Graph Convolutional Neural Networks},
  booktitle = {NeurIPS},
  pages     = {4869--4880},
  year      = {2019},
  doi       = {10.48550/arXiv.1910.12892},
}

@article{coifman2006diffusion,
  author    = {Coifman, Ronald R. and Lafon, St{\'e}phane},
  title     = {Diffusion Maps},
  journal   = {Appl.\ Comput.\ Harmon.\ Anal.},
  volume    = {21},
  number    = {1},
  pages     = {5--30},
  year      = {2006},
  doi       = {10.1016/j.acha.2006.04.006},
}

@article{delvenne2010stability,
  author    = {Delvenne, Jean-Charles and Yaliraki, Sophia N. and Barahona, Mauricio},
  title     = {Stability of Graph Communities across Time Scales},
  journal   = {PNAS},
  volume    = {107},
  number    = {29},
  pages     = {12755--12760},
  year      = {2010},
  doi       = {10.1073/pnas.0903215107},
}

@article{edge2024graphrag,
  author    = {Edge, Darren and Trinh, Ha and Cheng, Newman and others},
  title     = {From Local to Global: A Graph {RAG} Approach to Query-Focused Summarization},
  journal   = {arXiv:2404.16130},
  year      = {2024},
  doi       = {10.48550/arXiv.2404.16130},
}

@book{fellbaum1998wordnet,
  editor    = {Fellbaum, Christiane},
  title     = {WordNet: An Electronic Lexical Database},
  publisher = {MIT Press},
  year      = {1998},
  doi       = {10.7551/mitpress/7287.001.0001},
}

@inproceedings{ganea2018hyperbolic,
  author    = {Ganea, Octavian and B{\'e}cigneul, Gary and Hofmann, Thomas},
  title     = {Hyperbolic Neural Networks},
  booktitle = {NeurIPS},
  pages     = {5350--5360},
  year      = {2018},
  doi       = {10.48550/arXiv.1805.09112},
}

@article{fortunato2007resolution,
  author    = {Fortunato, Santo and Barth{\'e}lemy, Marc},
  title     = {Resolution Limit in Community Detection},
  journal   = {PNAS},
  volume    = {104},
  number    = {1},
  pages     = {36--41},
  year      = {2007},
  doi       = {10.1073/pnas.0605965104},
}

@article{gupta2000embedding,
  author    = {Gupta, Anupam},
  title     = {Embedding Tree Metrics into Low-Dimensional {E}uclidean Spaces},
  journal   = {Discrete Comput.\ Geom.},
  volume    = {24},
  number    = {1},
  pages     = {105--116},
  year      = {2000},
  doi       = {10.1007/s004540010020},
}

@book{grigoryan2009heat,
  author    = {Grigor'yan, Alexander},
  title     = {Heat Kernel and Analysis on Manifolds},
  series    = {AMS/IP Studies in Advanced Mathematics},
  volume    = {47},
  publisher = {AMS},
  year      = {2009},
  doi       = {10.1090/amsip/047},
}

@article{hammond2011wavelets,
  author    = {Hammond, David K. and Vandergheynst, Pierre and Gribonval, R{\'e}mi},
  title     = {Wavelets on Graphs via Spectral Graph Theory},
  journal   = {Appl.\ Comput.\ Harmon.\ Anal.},
  volume    = {30},
  number    = {2},
  pages     = {129--150},
  year      = {2011},
  doi       = {10.1016/j.acha.2010.04.005},
}

@article{hubert1985comparing,
  author    = {Hubert, Lawrence and Arabie, Phipps},
  title     = {Comparing Partitions},
  journal   = {J.\ Classification},
  volume    = {2},
  pages     = {193--218},
  year      = {1985},
  doi       = {10.1007/bf01908075},
}

@article{karcher1977riemannian,
  author    = {Karcher, Hermann},
  title     = {{Riemannian} Center of Mass and Mollifier Smoothing},
  journal   = {Comm.\ Pure Appl.\ Math.},
  volume    = {30},
  number    = {5},
  pages     = {509--541},
  year      = {1977},
  doi       = {10.1002/cpa.3160300502},
}

@article{lambiotte2014random,
  author    = {Lambiotte, Renaud and Delvenne, Jean-Charles and Barahona, Mauricio},
  title     = {Random Walks, {M}arkov Processes and the Multiscale Modular Organization of Complex Networks},
  journal   = {IEEE Trans.\ Netw.\ Sci.\ Eng.},
  volume    = {1},
  number    = {2},
  pages     = {76--90},
  year      = {2014},
  doi       = {10.1109/tnse.2015.2391998},
}

@book{lindeberg1994scale,
  author    = {Lindeberg, Tony},
  title     = {Scale-Space Theory in Computer Vision},
  publisher = {Kluwer Academic Publishers},
  year      = {1994},
  doi       = {10.1007/978-1-4757-6465-9},
}

@book{luebke2003lod,
  author    = {Luebke, David and Reddy, Martin and Cohen, Jonathan D. and others},
  title     = {Level of Detail for 3D Graphics},
  publisher = {Morgan Kaufmann},
  year      = {2003},
}

@inproceedings{nickel2017poincare,
  author    = {Nickel, Maximilian and Kiela, Douwe},
  title     = {Poincar{\'e} Embeddings for Learning Hierarchical Representations},
  booktitle = {NeurIPS},
  pages     = {6338--6347},
  year      = {2017},
  doi       = {10.48550/arXiv.1705.08039},
}

@article{packer2023memgpt,
  author    = {Packer, Charles and Wooders, Sarah and Lin, Kevin and others},
  title     = {{MemGPT}: Towards {LLMs} as Operating Systems},
  journal   = {arXiv:2310.08560},
  year      = {2023},
  doi       = {10.48550/arXiv.2310.08560},
}

@article{peixoto2014hierarchical,
  author    = {Peixoto, Tiago P.},
  title     = {Hierarchical Block Structures and High-Resolution Model Selection in Large Networks},
  journal   = {Phys.\ Rev.\ X},
  volume    = {4},
  pages     = {011047},
  year      = {2014},
  doi       = {10.1103/physrevx.4.011047},
}

@incollection{sarkar2011low,
  author    = {Sarkar, Rik},
  title     = {Low Distortion {Delaunay} Embedding of Trees in Hyperbolic Plane},
  booktitle = {GD 2011},
  series    = {LNCS},
  volume    = {7034},
  pages     = {355--366},
  publisher = {Springer},
  year      = {2011},
  doi       = {10.1007/978-3-642-25878-7_34},
}

@incollection{sturm2003probability,
  author    = {Sturm, Karl-Theodor},
  title     = {Probability Measures on Metric Spaces of Nonpositive Curvature},
  booktitle = {Heat Kernels and Analysis on Manifolds, Graphs, and Metric Spaces},
  series    = {Contemp.\ Math.},
  volume    = {338},
  pages     = {357--390},
  publisher = {AMS},
  year      = {2003},
  doi       = {10.1090/conm/338/06080},
}

@article{sun2009concise,
  author    = {Sun, Jian and Ovsjanikov, Maks and Guibas, Leonidas},
  title     = {A Concise and Provably Informative Multi-Scale Signature Based on Heat Diffusion},
  journal   = {Computer Graphics Forum},
  volume    = {28},
  number    = {5},
  pages     = {1383--1392},
  year      = {2009},
  doi       = {10.1111/j.1467-8659.2009.01515.x},
}

@article{traag2019louvain,
  author    = {Traag, Vincent A. and Waltman, Ludo and van Eck, Nees Jan},
  title     = {From {L}ouvain to {L}eiden: Guaranteeing Well-Connected Communities},
  journal   = {Scientific Reports},
  volume    = {9},
  pages     = {5233},
  year      = {2019},
  doi       = {10.1038/s41598-019-41695-z},
}

@article{traag2011narrow,
  author    = {Traag, Vincent A. and Van Dooren, Paul and Nesterov, Yurii},
  title     = {Narrow scope for resolution-limit-free community detection},
  journal   = {Physical Review E},
  volume    = {84},
  number    = {1},
  pages     = {016114},
  year      = {2011},
  doi       = {10.1103/PhysRevE.84.016114},
}

@article{vonluxburg2007tutorial,
  author    = {von Luxburg, Ulrike},
  title     = {A Tutorial on Spectral Clustering},
  journal   = {Statistics and Computing},
  volume    = {17},
  number    = {4},
  pages     = {395--416},
  year      = {2007},
  doi       = {10.1007/s11222-007-9033-z},
}

@article{zhang2024memory,
  author    = {Zhang, Zeyu and Bo, Xiaohe and Ma, Chen and Li, Rui and Chen, Xu and Dai, Quanyu and Zhu, Jieming and Dong, Zhenhua and Wen, Ji-Rong},
  title     = {A Survey on the Memory Mechanism of Large Language Model based Agents},
  journal   = {ACM Computing Surveys},
  year      = {2025},
  doi       = {10.1145/3748302},
}

@article{kendall1938new,
  author    = {Kendall, M. G.},
  title     = {A New Measure of Rank Correlation},
  journal   = {Biometrika},
  volume    = {30},
  number    = {1/2},
  pages     = {81--93},
  year      = {1938},
  doi       = {10.1093/biomet/30.1-2.81},
}

@article{blondel2008fast,
  author    = {Blondel, Vincent D. and Guillaume, Jean-Loup and Lambiotte, Renaud and Lefebvre, Etienne},
  title     = {Fast unfolding of communities in large networks},
  journal   = {Journal of Statistical Mechanics: Theory and Experiment},
  volume    = {2008},
  number    = {10},
  pages     = {P10008},
  year      = {2008},
  doi       = {10.1088/1742-5468/2008/10/P10008},
}

@article{rosvall2008maps,
  author    = {Rosvall, Martin and Bergstrom, Carl T.},
  title     = {Maps of random walks on complex networks reveal community structure},
  journal   = {Proceedings of the National Academy of Sciences},
  volume    = {105},
  number    = {4},
  pages     = {1118--1123},
  year      = {2008},
  doi       = {10.1073/pnas.0706851105},
}

@inproceedings{gao2023hyde,
  author    = {Gao, Luyu and Ma, Xueguang and Lin, Jimmy and Callan, Jamie},
  title     = {Precise Zero-Shot Dense Retrieval without Relevance Labels},
  booktitle = {Proceedings of the 61st Annual Meeting of the Association for Computational Linguistics (ACL)},
  year      = {2023},
  doi       = {10.18653/v1/2023.acl-long.99},
}

@article{tishby2000information,
  author    = {Tishby, Naftali and Pereira, Fernando C. and Bialek, William},
  title     = {The Information Bottleneck Method},
  journal   = {arXiv preprint physics/0004057},
  year      = {2000},
  url       = {https://arxiv.org/abs/physics/0004057},
}

@article{friston2010free,
  author    = {Friston, Karl},
  title     = {The free-energy principle: a unified brain theory?},
  journal   = {Nature Reviews Neuroscience},
  volume    = {11},
  number    = {2},
  pages     = {127--138},
  year      = {2010},
  doi       = {10.1038/nrn2787},
}
